\theoremstyle{definition}
\newtheorem{theorem}{Theorem}
\newtheorem{proposition}[theorem]{Proposition}
\newtheorem{definition}[theorem]{Definition}
\setlist[enumerate]{itemsep=0.2ex, topsep=0.5\topsep}
\setlist[description]{itemsep=0.2ex, topsep=0.5\topsep}
\setlist[itemize]{itemsep=0.2ex, topsep=0.5\topsep}
\algnewcommand\algorithmicforeach{\textbf{for each}}
\algnewcommand\algorithmicparallelforeach{\textbf{in parallel for each}}
\algnewcommand{\algorithmicand}{\textbf{ and }}
\algnewcommand{\algorithmicor}{\textbf{ or }}
\algnewcommand{\OR}{\algorithmicor}
\algnewcommand{\AND}{\algorithmicand}
\algnewcommand\algorithmicnot{\textbf{not}}
\let\oldReturn\Return
\renewcommand{\Return}{\State\oldReturn}
\def\thmt@refnamewithcomma #1#2#3,#4,#5\@nil{%
	\@xa\def\csname\thmt@envname #1utorefname\endcsname{#3}%
	\ifcsname #2refname\endcsname
	\csname #2refname\expandafter\endcsname\expandafter{\thmt@envname}{#3}{#4}%
	\fi
}
\DeclareMathOperator{\depth}{depth}
\DeclareMathOperator{\children}{chi}
\DeclareMathOperator{\height}{hgt}
\DeclareMathOperator{\ahu}{c_\mathsf{ahu}}
\DeclareMathOperator{\ahut}{\hat{c}_\mathsf{ahu}}
\newcommand{\wl}{c_{\mathsf{wl}}}
\DeclarePairedDelimiter\multiset{\lbrace\!\!\lbrace}{\rbrace\!\!\rbrace}%
\begin{document}
\title{On the Two Sides of Redundancy \\ in Graph Neural Networks}
	
	\author[1,2]{Franka Bause}
	\author[1,2]{Samir Moustafa}
	\author[3]{Johannes Langguth}
	\author[1]{Wilfried~N.~Gansterer}
	\author[1,4]{Nils M.~Kriege}
	
	\affil[1]{Faculty of Computer Science, University of Vienna, Vienna, Austria}
	\affil[2]{UniVie Doctoral School Computer Science, University of Vienna, Vienna, Austria}
	\affil[3]{Simula Research Laboratory, Oslo, Norway}
	\affil[4]{Research Network Data Science, University of Vienna, Vienna, Austria}
	\affil[ ]{\ttfamily\{firstname.lastname\}@univie.ac.at}

\date{\vspace{-30pt}}

\maketitle

\begin{abstract}
Message passing neural networks iteratively generate node embeddings by aggregating information from neighboring nodes. With increasing depth, information from more distant nodes is included. However, node embeddings may be unable to represent the growing node neighborhoods accurately and the influence of distant nodes may vanish, a problem referred to as oversquashing.
Information redundancy in message passing, i.e., the repetitive exchange and encoding of identical information amplifies oversquashing. We develop a novel aggregation scheme based on neighborhood trees, which allows for controlling redundancy by pruning redundant branches of unfolding trees underlying standard message passing. 
While the regular structure of unfolding trees allows the reuse of intermediate results in a straightforward way, the use of neighborhood trees poses computational challenges. We propose compact representations of neighborhood trees and merge them, exploiting computational redundancy by identifying isomorphic subtrees.
From this, node and graph embeddings are computed via a neural architecture inspired by tree canonization techniques.
Our method is less susceptible to oversquashing than traditional message passing neural networks and can improve the accuracy on widely used benchmark datasets.

\end{abstract}

\section{Introduction}
Graph neural networks (GNNs) have emerged as the dominant approach for machine learning on graph data, with the class of message passing neural networks (MPNNs)~\citep{Gil+2017} being widely-used. These networks update node embeddings layer wise by combining the current embedding of a node with those of its neighbors, involving learnable parameters. Suitable neural architectures, which admit a parametrization such that each layer represents an injective function uniquely encoding the input, have the same expressive power as the Weisfeiler-Leman algorithm~\citep{DBLP:journals/corr/abs-1810-00826}.
The Weisfeiler-Leman algorithm distinguishes two nodes if and only if the unfolding trees representing their neighborhoods are non-isomorphic. These unfolding trees correspond to the computational trees of MPNNs~\citep{extraforrev,Jegelka2022GNNtheory}.
Hence, nodes with isomorphic unfolding trees will obtain the same embedding, while for nodes with non-isomorphic unfolding trees, there exist parameters such that their embeddings differ. This implies that deeper unfolding trees lead to more expressive methods.
Despite this theoretical connection, shallow MPNNs are often favored in practice. Challenges arise from the observed convergence of node embeddings for deep architectures, referred to as \emph{oversmoothing}~\citep{Li2018,Liu2020}, and the issue of \emph{oversquashing}~\citep{DBLP:conf/iclr/0002Y21}, where the neighborhood of a node grows exponentially with the number of layers, and therefore, cannot be supposed to be accurately represented by a fixed-sized embedding. Recently, oversquashing has been investigated by analyzing the sensitivity of node embeddings to the initial features of distant nodes, relating the phenomenon to the \emph{graph curvature}~\cite{ToppingGC0B22}, the \emph{effective resistance}~\cite{Black23} and the \emph{commute time}~\cite{DBLP:conf/icml/GiovanniGBLLB23}. On this basis several graph rewiring strategies have been proposed to mitigate oversquashing~\cite{ToppingGC0B22,Black23}.

We address the problem of oversquashing by modifying the message passing scheme for eliminating the encoding of repeated information. For example, in an undirected graph, %
when a node sends information to its neighbour, future messages sent back via the same edge will contain the exact information previously sent, leading to redundancy.
In the context of walk-based graph learning this problem is well-known and referred to as \emph{tottering}~\citep{Mah'e2004}. 
Recent work by \citet{NEURIPS2022_1bd6f176} established a first result formalizing the relation between redundancy and oversquashing by sensitivity analysis.
Several recent GNNs replace the walk-based aggregation by mechanisms based on simple or shortest paths reporting promising results~\citep{AbboudDC22,michel_expressive_2023,DBLP:conf/kdd/JiaLYYLA20}. PathNNs~\citep{michel_expressive_2023} and RFGNN~\citep{DBLP:conf/kdd/JiaLYYLA20} are closely related approaches, defining path-based trees for nodes and employing custom aggregation schemes. However, these methods suffer from high computational costs compared to standard MPNNs and often have an exponential time complexity. The crucial advantage of MPNNs is the regular structure of aggregations applied through all layers, while reducing information redundancy leads to a less regular structure, rendering it challenging to exploit computational redundancy.

\paragraph{Our contribution.}
We systematically explore the issue of information redundancy within MPNNs and introduce principled techniques to eliminate superfluous messages. Our investigation is based on the implicit tree representation used by both MPNNs and the Weisfeiler-Leman algorithm. We first develop a neural tree canonization approach that systematically processes trees in a bottom-up fashion and extend it to directed acyclic graphs (DAGs). To exploit computational redundancy, we merge multiple trees representing node neighborhoods into a single DAG identifying isomorphic subtrees.
Our approach, termed DAG-MLP, recovers the computational graph of MPNNs for unfolding trees, while avoiding redundant computations in the presence of symmetries.
We employ the canonization technique on \emph{neighborhood trees}, which are derived from unfolding trees by eliminating nodes that appear multiple times. We show that neighborhood trees allow distinguishing nodes and graphs that are indistinguishable by the Weisfeiler-Leman algorithm. 
The DAGs derived from neighborhood trees have size at most $O(nm)$ for input graphs with $n$ nodes and $m$ edges making the approach computational feasible. 
We formally show by sensitivity analysis that our approach reduces oversquashing.
Our approach achieves high accuracy across various node and graph classification tasks.

\section{Related Work}

\begin{table}[tb]
    \centering
     \caption{Time complexity of preprocessing, size of computation graph and expressivity of our method compared to related work. $n$: number of nodes,
$m$: number of edges,
$b$: maximum node degree,
$K$: path length,
$h$: tree height,
$L$: number of layers, and
$m_2 = 0.5 \sum_{v \in V} |N_2(v)|$.}
    \label{tab:complexity}
    \begin{tabular}{cccc}
    \toprule
    \textbf{Method} & \textbf{Preprocessing} & \textbf{Size Comp. Graph/Runtime} & \textbf{Expressivity} \\
    \midrule
    PathNet & $O(mb)$ &$O(2^L(m+m_2))$&n/a\\
    PathNN-SP & $O(nb^K)$& $O(nbK)$ & incomparable\\
    PathNN-SP+ & $O(nb^K)$& $O(nbK)$ &$>1$-WL\\
    RFGNN & $O(\nicefrac{n!}{(n-h-1)!})$&$O(\nicefrac{n!}{(n-h-1)!})$&incomparable\\
    \midrule
    DAG-MLP ($0$-/$1$-NTs) & $O(nm)$  & $O(nm)$&incomparable\\
    \bottomrule
    \end{tabular}
\end{table}

The graph isomorphism network (GIN)~\citep{DBLP:journals/corr/abs-1810-00826} is an MPNN that generalizes the Weisfeiler-Leman algorithm, achieving its expressive power.
The limited expressivity of simple MPNNs has led to an increased interest in researching more powerful architectures, such as encoding graph structure as additional features or modifying the message passing procedure. 
Shortest Path Networks~\citep{AbboudDC22} use multiple aggregation functions for different shortest path lengths, %
allowing direct communication with distant nodes. While this might help mitigate oversquashing, information about the structure of the neighborhood can still not be represented adequately and the gain in expressivity is limited.
Distance Encoding GNNs~\citep{LiWWL20} encode the distances of nodes to a set of target nodes. While being provably more expressive than the standard WL algorithm, the approach is limited to solving node-level tasks, as the encoding depends on a fixed set of target nodes and has not been employed for graph-level tasks.
MixHop~\citep{Abu-El-HaijaPKA19} concatenates results from activation functions for each neighborhood, but in contrast to Shortest Path Networks~\citep{AbboudDC22}, the aggregation is based on normalized powers of the adjacency matrix, not shortest paths, which fails to solve the problem of redundant messages.
SPAGAN~\citep{YangWSYT19} proposes a path-based attention mechanism, sampling shortest paths and using them as features. However, a theoretical investigation is lacking and the approach utilizes only one layer.
Short-rooted random walks in~\citep{sun2022beyond} capture long-range dependencies, but have notable limitations due to sampling paths. The evaluation is restricted to node classification datasets and an extensive study of their expressive power is lacking.
IDGNN~\citep{YouGYL21} tracks the identity of the root node in unfolding trees, achieving higher expressivity than $1$-WL, %
but failing to reduce redundant information aggregation.
PathNNs~\citep{michel_expressive_2023} define path-based trees and a custom aggregation scheme, but overlook exploiting computational redundancy.
RFGNNs~\citep{NEURIPS2022_1bd6f176} aim to reduce redundancy by altering the message flow to only include each node (except for the root node) at most once in each path of the computational tree. While this reduces redundancy to some extent, nodes and even the same subpaths may repeatedly occur in the computational trees.
The redundancy in computation is not addressed resulting in a highly inefficient preprocessing and computation, limiting the method to a maximum of $3$ layers in the experiments. We discuss further differences between our approach and RFGNN in Appendix~\ref{sec:rfgnn}.

These architectures lack thorough investigation of their expressivity and connections to other approaches. Importantly, they do not explicitly investigate both types of redundancy in MPNNs -- redundancy in the information flow and in computation. We compare the time complexity, as well as the expressivity of our method DAG-MLP and other relevant methods in Table~\ref{tab:complexity} and further discuss it in Section~\ref{sec:complexity}.

\section{Preliminaries}
In this section, we provide an overview of essential definitions and the notation used throughout this article, accompanied by the introduction of fundamental techniques.

\begin{figure}[tb]
	\centering
	\begin{subfigure}{0.19\linewidth}
		\includegraphics[height=0.06\textheight]{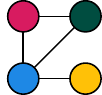}
	\end{subfigure}
	\begin{subfigure}{0.19\linewidth}
		\includegraphics[height=0.06\textheight]{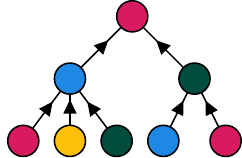}
	\end{subfigure}
	\begin{subfigure}{0.19\linewidth}
		\includegraphics[height=0.06\textheight]{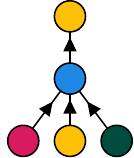}
	\end{subfigure}
	\begin{subfigure}{0.19\linewidth}
		\includegraphics[height=0.06\textheight]{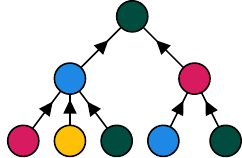}
	\end{subfigure}
	\begin{subfigure}{0.19\linewidth}
		\includegraphics[height=0.06\textheight]{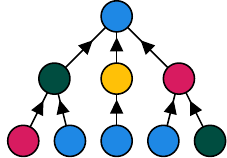}
	\end{subfigure}
	\caption{Graph $G$ and its unfolding trees $F_2^v$ for all $v\in V(G)$.}
	\label{fig:unfoldingtree}
\end{figure}

\paragraph{Graph theory.}
A \emph{graph} $G = (V,E,\mu,\nu)$ consists of a set of vertices $V$, a set of edges $E \subseteq V\times V$ between them, and functions $\mu \colon V \to X$ and $\nu\colon E \to X$ assigning arbitrary attributes to the vertices and edges, respectively.\footnote{Edge attributes are not considered in the following for clarity of presentation, though the proposed methods can be extended to incorporate them.}
An edge from $u$ to $v$ is denoted by $uv$, and in undirected graphs $uv=vu$.
The vertices and edges of a graph $G$ are denoted by $V(G)$ and $E(G)$, respectively. The \emph{neighbors} (or in-neighbors) of a vertex $u \in V$ are denoted by $N(u) = \{v\mid  vu\in E \}$, and the \emph{out-neighbors} of a vertex $u \in V$ are denoted by $N_o(u) = \{v\mid  uv\in E\}$.
A \emph{multigraph} is a graph, where $E$ is a multiset, allowing multiple edges between a pair of vertices.
Two graphs $G$ and $H$ are isomorphic, denoted by $G\simeq H$, if there exists a bijection $\phi\colon V(G) \to V(H)$, such that $\forall u,v \in V(G)\colon \mu(v)= \mu(\phi(v)) \land uv \in E(G) \Leftrightarrow \phi(u)\phi(v)\in E(H) \land \forall uv \in E(G)\colon \nu(uv)=\nu(\phi(u)\phi(v))$. We refer to $\phi$ as an \emph{isomorphism} between $G$ and $H$.

An \emph{in-tree} $T$ is a connected, directed, acyclic graph with a distinct vertex $r \in V(T)$ with no outgoing edges, referred to as \emph{root} ($r(T)$), in which $\forall v \in V(T)\backslash r(T): |N_o(v)|=1$. For $v \in V(T)\backslash r(T)$ the \emph{parent} $p(v)$ is the unique vertex $u \in N_o(v)$, and $\forall v \in V(T)$ the \emph{children} are defined as $\children(v)= N(v)$. 
We refer to vertices without incoming edges as \emph{leaves}, denoted by $l(T)=\{v \in V(T) \mid \children(v) = \emptyset\}$. Conceptually an in-tree is a directed tree, in which there is a unique directed path from each vertex to the root~\citep{intree}. In our paper, we only consider in-trees and will therefore refer to them simply as \emph{trees}. 
In-trees are generalized by directed, acyclic graphs (DAGs).
The \emph{leaves} of a DAG $D$ and the \emph{children} of a vertex are defined as in trees. However, there can be multiple roots, and a vertex may have more than one parent. We refer to all vertices in $D$ without outgoing edges as \emph{roots}, denoted by $r(D)=\{v \in V(D) \mid N_o(v) = \emptyset\}$, and define the \emph{parents} $p(v)$ of a vertex $v$ as $p(v)=N_o(v)$.
The height $\height$ of a node $v$ is the length of the longest path from any leaf to $v$: $\height(v) = 0\text{, if } v \in l(D) \text{ and }
\height(v) = \max_{c \in \children(v)} \height(c) + 1\text{, otherwise.} $
The height of a DAG $D$ is defined as $\height(D)=\max_{v \in V(D)} \height(v)$.
For clarity we refer to the vertices of a DAG as nodes to distinguish them from the graphs that are the input of a graph neural network.

\paragraph{Weisfeiler-Leman and Message Passing Neural Networks.}\label{sec:wl_unfolding}
The 1-dimensional Weisfeiler-Leman (WL) algorithm, also known as \emph{color refinement}, starts with vertices having a color corresponding to their label (or a uniform coloring for unlabeled vertices). 
In each iteration the vertex color is updated based on the multiset of colors of its neighbors according to
\begin{equation*}
\wl^{(i+1)}(v) = h\left(\wl^{(i)}(v),\multiset{\wl^{(i)}(u)\mid u \in N(v)}\right) \quad\forall v \in V(G),
\end{equation*}
where $h$ is an injective function, typically using integers to represent colors.

The color of a vertex encodes its neighborhood through a tree $T$, which may contain multiple representatives of each vertex. Let $\phi\colon V(T)\to V(G)$ be a mapping such that $\phi(n)=v$ if the node $n$ in $V(T)$ represents the vertex $v$ in $V(G)$.
The \emph{unfolding tree} $F_i^v$ with height $i$ of the vertex $v \in V(G)$ consists of a root $n_v$ with $\phi(n_v)=v$ and child subtrees $F^u_{i-1}$ for all $u \in N(v)$, where $F_0^v = (\{n_v\}, \emptyset)$.
The attributes of the original graph are preserved, as illustrated in Figure~\ref{fig:unfoldingtree}.
The unfolding trees $F_i^v$ and $F_i^w$ of two vertices $v$ and $w$ are isomorphic if and only if $\wl^{(i)}(v)=\wl^{(i)}(w)$.

\begin{figure}[tb]
\centering
\begin{subfigure}{0.19\linewidth}\centering
	\includegraphics[height=0.06\textheight]{examplegraph}
	\subcaption{Graph $G$}
\end{subfigure}
\begin{subfigure}{0.19\linewidth}\centering
	\includegraphics[height=0.06\textheight]{utree1}
	\subcaption{$F_2^v$}
\end{subfigure}
\begin{subfigure}{0.19\linewidth}\centering
	\includegraphics[height=0.06\textheight]{ntree0_1}
	\subcaption{$T_{2,0}^v$}
\end{subfigure}
\begin{subfigure}{0.19\linewidth}\centering
	\includegraphics[height=0.06\textheight]{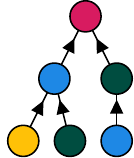}
	\subcaption{$T_{2,1}^v$}
\end{subfigure}
\caption{Graph $G$ and the unfolding, $0$- and $1$-redundant neighborhood trees of height $2$ of vertex $v$ (vertex in the upper left of $G$).}
\label{fig:exampletrees}
\end{figure}

\emph{Message passing neural networks} such as GIN~\citep{DBLP:journals/corr/abs-1810-00826} can be seen as a neural variant of the Weisfeiler-Leman algorithm. The embedding of a vertex $v$ in layer $i$ of GIN is defined as
\begin{equation}\label{eq:gin}
x_i(v) = \text{MLP}_i \left(\left( 1 + \epsilon_i \right) \cdot x_{i-1}(v) + \sum_{u\in  N(v)} x_{i-1}(u) \right),
\end{equation}
where the initial features $x_0(v)$ are usually acquired by applying a multi-layer perceptron (MLP) to the vertex features.

\section{Non-Redundant Graph Neural Networks}
We propose to restrict the information flow in message passing to regulate redundancy through the use of $k$-redundant neighborhood trees. We first develop a neural tree canonization technique, and obtain an MPNN via its application to unfolding trees.
Subsequently, we explore computational methods on graph level, reusing information computed for subtrees and derive a customized GNN architecture.
Finally, we prove that $k$-redundant neighborhood trees and unfolding trees are incomparable regarding their expressivity on node-level.

\subsection{Removing Information Redundancy}\label{sec:canon}
As previously discussed, two vertices obtain the same WL color if and only if their unfolding trees are isomorphic. This concept directly carries over to message passing neural networks and their computational tree~\citep{extraforrev,Jegelka2022GNNtheory}.
However, unfolding trees were mainly used as tools in expressivity analysis and as a conceptual framework for explaining mathematical properties in graph learning~\citep{Kriege2016b,NikolentzosCV23}.
We propose a novel perspective on MPNNs through tree canonization. From this perspective, we derive a non-redundant GNN architecture based on neighborhood trees.

In their classical textbook, \citet*[Section 3.2]{AhoHU74} describe a linear time isomorphism test for rooted unordered trees, detailed in Appendix~\ref{sec:ahu}.
We give a high-level description to establish the foundation for our neural variant without focusing on the running time.
The algorithm proceeds in a bottom-up manner, assigning integers $\ahu(v)$ to each node $v$ in the tree. 
The function $f$ injectively maps a pair consisting of an integer and a multiset of integers to a new, unused integer. 
Initially, all leaves $v$ are assigned integers $\ahu(v)=f(\mu(v),\emptyset)$ based on their label $\mu(v)$. Then internal nodes are processed level-wise in a bottom-up manner, ensuring that whenever a node is processed, all its children have been considered. 
Hence, the algorithm computes for all nodes $v$ of the tree
\begin{equation}\label{eq:canon}
\ahu(v)= f(\mu(v), \multiset{ \ahu(u) \mid u \in \children(v) }).
\end{equation}
This process ensures the unique representation of non-isomorphic trees, serving as the foundation of our neural tree canonization technique.

\paragraph{GNNs via unfolding tree canonization.}
We combine Eq.~\eqref{eq:canon} and the definition of unfolding trees, denoting the root of an unfolding tree of height $i$ of a vertex $v$ by $n^i_v$. This yields
\begin{equation}\label{eq:wl_gnn}
\ahu(n^i_v)
=f(\mu(n^i_v), \multiset{ \ahu(n^{i-1}_u) \mid n^{i-1}_u \in \children(n^i_v) })
=f(\mu(v), \multiset{ \ahu(n^{i-1}_u) \mid u \in N(v) }).
\end{equation}
By implementing the function $f$ using a suitable neural architecture and replacing its codomain with embeddings in $\mathbb{R}^d$, we readily obtain a GNN based on our canonization approach.
The key difference to standard GNNs is that the first component of the pair in Eq.~\eqref{eq:wl_gnn} is the initial vertex feature instead of the embedding from the previous iteration.
Utilizing the technique proposed by~\citet{DBLP:journals/corr/abs-1810-00826} and replacing the first addend in Eq.~\eqref{eq:gin} with the initial embedding, we formulate the \emph{unfolding tree canonization GNN}
\begin{equation}\label{eq:canongin}
x_i(v) = \text{MLP}_i \left(\left(1 + \epsilon_i \right) \cdot x_{0}(v) + \sum_{u\in  N(v)} x_{i-1}(u) \right).
\end{equation}
It is established that MPNNs cannot distinguish two vertices with the same WL color or unfolding tree. Given that the function $\ahu(n^i_v)$ uniquely represents the unfolding tree for an injective function $f$, realizable by Eq.~\eqref{eq:canongin}~\citep{DBLP:journals/corr/abs-1810-00826}, we infer the following proposition.
\begin{proposition}
Unfolding tree canonization GNNs, as defined in Eq.~\eqref{eq:canongin}, are as expressive as GIN, as defined in Eq.~\eqref{eq:gin}.
\end{proposition}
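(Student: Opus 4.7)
The plan is to establish expressivity equivalence by sandwiching both architectures between the same upper and lower bound, namely isomorphism of unfolding trees (equivalently, the 1-WL coloring). Since GIN is already known to attain this bound \citep{DBLP:journals/corr/abs-1810-00826}, it suffices to show that the unfolding tree canonization GNN of Eq.~\eqref{eq:canongin} (i) cannot distinguish vertices with isomorphic unfolding trees, and (ii) can, under a suitable parametrization, distinguish any two vertices with non-isomorphic unfolding trees.

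For the upper bound, I would proceed by induction on the layer index $i$, showing that $x_i(v)$ is a function of the unfolding tree $F_i^v$ only up to isomorphism. The base case is immediate: $x_0(v)$ depends solely on $\mu(v)$, which equals the label of the root of $F_0^v$. For the inductive step, assume $F_i^v \simeq F_i^w$. Then the roots agree in label, yielding $x_0(v) = x_0(w)$, and the multiset of child subtrees at the root of $F_i^v$ equals (up to isomorphism) the corresponding multiset at $w$; by the inductive hypothesis, $\multiset{x_{i-1}(u) \mid u \in N(v)} = \multiset{x_{i-1}(u) \mid u \in N(w)}$. Substituting both equalities into Eq.~\eqref{eq:canongin} yields $x_i(v) = x_i(w)$.

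For the lower bound, I would invoke the injectivity construction underlying GIN (Lemma 5 and Corollary 6 of \citet{DBLP:journals/corr/abs-1810-00826}): over any countable input domain, there exist choices of $\epsilon_i$ and MLP parameters such that $(c, M) \mapsto (1+\epsilon_i)\cdot c + \sum_{x \in M} x$ composed with $\text{MLP}_i$ realizes an injective map on pairs consisting of a feature and a multiset of features. Matching this construction layer by layer to the recursion in Eq.~\eqref{eq:wl_gnn} shows that the learned $x_i(v)$ is in bijection with $\ahu(n^i_v)$, which in turn uniquely encodes $F_i^v$ by the correctness of the AHU canonization. Hence the architecture separates exactly those vertex pairs whose unfolding trees are non-isomorphic, matching the power of 1-WL and therefore of GIN.

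The main obstacle, conceptually, is to justify that replacing the updated self-term $x_{i-1}(v)$ in Eq.~\eqref{eq:gin} by the raw initial feature $x_0(v)$ in Eq.~\eqref{eq:canongin} does not reduce expressivity. The resolution is structural rather than computational: the unfolding tree $F_i^v$ carries the label $\mu(v)$ at its root, not a WL-refined color, so the canonization recursion in Eq.~\eqref{eq:canon} already combines the raw root label with the canonized child subtrees. Equation~\eqref{eq:canongin} is the faithful neural implementation of this recursion, whereas GIN compensates by threading the center's color through layers; both ultimately encode the same unfolding tree, which is why they coincide in expressive power.
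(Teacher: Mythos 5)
Your proposal is correct and follows essentially the same route as the paper: both arguments bound the canonization GNN above by isomorphism of unfolding trees (your explicit induction just spells out what the paper cites as the standard MPNN/WL bound) and below by realizing an injective $f$ in Eq.~\eqref{eq:canongin} via the GIN construction of \citet{DBLP:journals/corr/abs-1810-00826}, so that $x_i(v)$ encodes $\ahu(n^i_v)$ and hence $F_i^v$. No substantive difference in approach or gaps to report.
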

Despite the equivalence in expressivity, the canonization-based approach avoids redundancy since $x_{i-1}(v)$ represents the entire unfolding tree rooted at $v$ of height $i-1$, while using the initial vertex features $x_{0}(v)$ is sufficient. We proceed by investigating redundancy within unfolding trees themselves.

\paragraph{GNNs via neighborhood tree canonization.}
We leverage the concept of neighborhood trees to manage redundancy in unfolding trees.\footnote{In a parallel work, neighborhood trees were investigated for approximating the graph edit distance~\cite{NTreeGED}. %
}
A $k$-redundant neighborhood tree ($k$-NT) $T_{i,k}^v$ is derived from the unfolding tree $F_{i}^v$ by removing all subtrees with roots that occurred more than $k$ levels before (seen from root to leaves). Here, $\depth(v)$ denotes the length of the path from $v$ to the root, and $\phi(v)$ denotes the original vertex in $V(G)$ represented by $v$ in the unfolding or neighborhood tree. 

\begin{definition}[$k$-redundant Neighborhood Tree]\label{def:nt}
For $k\geq 0$, the \emph{$k$-redundant neighborhood tree} ($k$-NT) of a vertex $v \in V(G)$ with height $i$, denoted by $T_{i,k}^v$, is defined as the subtree of the unfolding tree $F_{i}^v$ induced by the nodes $u \in V(F_{i}^v)$ satisfying
\begin{equation*}
\forall w \in V(F_{i}^v)\colon\phi(u)=\phi(w) \Rightarrow  \depth(u)\leq \depth(w)+k.
\end{equation*}
\end{definition}
Figures~\ref{fig:exampletrees} and~\ref{fig:ntree} provide examples of unfolding and neighborhood trees.
It is worth noting that for $k\geq i$ the $k$-redundant neighborhood tree is equivalent to the WL unfolding tree.
\begin{figure}[tb]
\centering
\begin{subfigure}{0.19\linewidth}
	\includegraphics[height=0.05\textheight]{examplegraph}
\end{subfigure}
\begin{subfigure}{0.19\linewidth}
	\includegraphics[height=0.05\textheight]{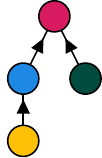}
\end{subfigure}
\begin{subfigure}{0.19\linewidth}
	\includegraphics[height=0.05\textheight]{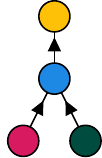}
\end{subfigure}
\begin{subfigure}{0.19\linewidth}
	\includegraphics[height=0.05\textheight]{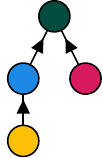}
\end{subfigure}
\begin{subfigure}{0.19\linewidth}
	\includegraphics[height=0.05\textheight]{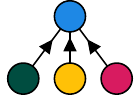}
\end{subfigure}
\caption{Graph $G$ and its $0$-NTs $T_{2,0}^v$ for all $v\in V(G)$.}
\label{fig:ntree}
\end{figure}

We can directly apply the neural tree canonization technique to neighborhood trees. However, a simplifying expression based on the neighbors in the input graph, as given by Eq.~\eqref{eq:wl_gnn} for unfolding trees, is not possible for neighborhood trees. Therefore, we explore techniques to systematically exploit computational redundancy.

\subsection{Removing Computational Redundancy} \label{sec:comp_redundancy}
The computation DAG of an MPNN involves the embedding of a set of trees representing the vertex neighborhoods of a single or multiple graphs. Results computed for one tree can be reused for others by identifying isomorphic substructures, thereby minimizing computational redundancy. We first describe how to merge trees in a general context and then discuss its application to unfolding and neighborhood trees.

\begin{figure}[tb]
	\centering
	\begin{subfigure}{0.275\linewidth}\centering
		\includegraphics[width=0.65\textwidth]{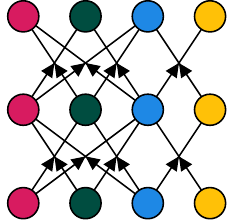}
		\subcaption{Merged unfolding trees}
		\label{fig:unfoldingDAG}
	\end{subfigure}
	\begin{subfigure}{0.225\linewidth}\centering
		\includegraphics[width=0.8\textwidth]{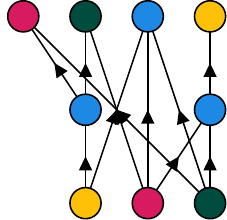}
		\subcaption{Merged $0$-NTs}
		\label{fig:ntreeDAG}
	\end{subfigure}
	\begin{subfigure}{0.225\linewidth}\centering
		\includegraphics[width=0.8\textwidth]{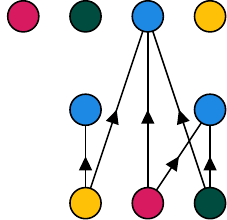}
		\subcaption{$\mathcal{E}_1$}\label{fig:e1}
	\end{subfigure}
	\begin{subfigure}{0.225\linewidth}\centering
		\includegraphics[width=0.8\textwidth]{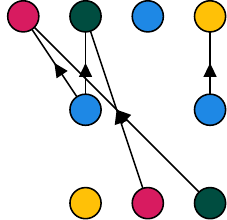}
		\subcaption{$\mathcal{E}_2$}\label{fig:e2}
	\end{subfigure}
	\caption{Computation DAGs for unfolding (\subref{fig:unfoldingDAG}) and $0$-NTs (\subref{fig:ntreeDAG}) of height $2$ of graph $G$. And edges in the different layers of the merge DAG of $0$-NTs (\subref{fig:e1}), (\subref{fig:e2}).}
	\label{fig:compgraph}
\end{figure}

\paragraph{Merging trees into a DAG.}
\label{subsec:mergetrees}
The neural tree canonization approach developed in the last section can be directly applied to DAGs. Given a DAG $D$, it computes an embedding for each node $n$ in $D$ that represents the tree $F_{n}$ obtained by recursively following its children, similar as in unfolding trees, cf.~Section~\ref{sec:wl_unfolding}. 
Since $D$ is acyclic, the height of $F_{n}$ is bounded.
A detailed description of a neural architecture is postponed to Section~\ref{sec:dagmlp}.

Given a set of trees $\mathcal{T}=\{T_1, \dots, T_n\}$, a \emph{merge DAG} of $\mathcal{T}$ is a pair $(D,\xi)$, where $D$ is a DAG, $\xi\colon\{1,\dots n\} \to V(D)$ is a mapping, and for all $i \in \{1,\dots,n\}$ we have $T_i \simeq F_{\xi(i)}$.
The definition guarantees that the neural tree canonization approach applied to the merge DAG produces the same result for the nodes in the DAG as for the nodes in the original trees.
A trivial merge DAG is the disjoint union of the trees with $\xi(i)=r(T_i)$. However, depending on the structure of the given trees, we can identify the subtrees they have in common and represent them only once, such that two nodes of different trees share the same child, resulting in a DAG instead of a forest.

We propose an algorithm that builds a merge DAG by successively adding trees to an initially empty DAG, creating new nodes only when necessary. Our approach maintains a canonical labeling for each node of the DAG and computes a canonical labeling for each node of the tree to be added using the AHU algorithm (cf.~Appendix~\ref{sec:ahu}). Then, the tree is processed starting at the root. If the canonical labeling of the root is present in the DAG, then the algorithm terminates. Otherwise, the subtrees rooted at its children are inserted into the DAG by recursive calls. Finally, the root is created and connected to the representatives of its children in the DAG.
We introduce a node labeling $L\colon V_T \to \mathcal{O}$ used for tree canonization, where $V_T=\bigcup_{i=1}^n V(T_i)$ and $\mathcal{O}$ an arbitrary set of labels, refining the original node attributes, i.e., $L(u)=L(v)\Rightarrow \mu(u)=\mu(v)$ for all $u,v$ in $V_T$. When $\mathcal{O}$ consists of integers from the range $1$ to $|V_T|$, the algorithm runs in $O(|V_T|)$ time (see Appendix~\ref{sec:mergetrees_appendix} for details).
When two siblings that are the roots of isomorphic subtrees are merged, this leads to parallel edges in the DAG. Parallel edges can be avoided by using a labeling satisfying $L(u)=L(v) \Rightarrow \mu(u)=\mu(v) \wedge p(u) \neq p(v)$ for all $u,v$ in $V_T$. 

Unfolding trees and $k$-NTs can grow exponentially in size with increasing height. However, this is not case for merge DAGs obtained by the algorithm described above, as we will show below. Moreover, we can directly generate DAGs of size $O(m \cdot (k + 1))$ representing individual $k$-NTs with unbounded height in a graph with $m$ edges (see Appendix~\ref{sec:compacttrees} for details).

\paragraph{Merging unfolding trees.}
Merging the unfolding trees of a graph with the labeling $L=\phi$ leads to the computation DAG of GNNs. Figure~\ref{fig:unfoldingDAG} shows the computation DAG for the graph from Figure~\ref{fig:unfoldingtree}. The roots in this DAG correspond to the representation of the vertices after aggregating information from the lower layers.
Each vertex occurs once at every layer of the DAG, and the links between any two consecutive layers are given by the adjacency matrix of the original graph. While this allows computation based on the adjacency matrix widely-used for MPNNs, it involves the encoding of redundant information.
Our method has the potential to compress the computational DAG further by using the less restrictive labeling $L=\mu$, leading to a DAG where at layer $i$ all vertices $u,v$ with $\wl^{(i)}(u)=\wl^{(i)}(v)$ are represented by the same node. This compression appears particularly promising for graphs with symmetries.

\paragraph{Merging neighborhood trees.}
When merging $k$-redundant neighborhood trees in the same way using the labeling $L=\mu$ (or $L=\phi$ to avoid parallel edges), it results in a computation DAG with a more irregular structure, as illustrated in Figure~\ref{fig:ntreeDAG}. Firstly, there might be multiple nodes at the same level representing the same original vertex. Secondly, the adjacency matrix of the original graph cannot be used to propagate the information. A straightforward upper bound on the size of the merge DAG for a graph with $n$ nodes and $m$ edges is $O(nmk + nm)$.

We apply the neural tree canonization approach to the merge DAG in a bottom-up fashion, starting from the leaves and progressing to the roots. Each edge is used exactly once in this computation. 
Let $D=(\mathcal{V},\mathcal{E})$ be a merge DAG.
The nodes can be partitioned based on their height, leading to
$\mathcal{L}_i = \{v \in \mathcal{V} \mid \height(v) = i\}$.
This induces an edge partition $\mathcal{E}_i = \{uv \in \mathcal{E} \mid v \in \mathcal{L}_{i}\}$, where all edges with the same end node $v$ are in the same layer, and all incoming edges of children of $v$ belong to a previous layer. Note that, since $\mathcal{L}_0$ contains all leaves of the DAG, there is no $\mathcal{E}_{0}$.
Figures~\ref{fig:e1} and~\ref{fig:e2} depict the edge sets $\mathcal{E}_1$ and $\mathcal{E}_2$ for the example merge DAG illustrated in Figure~\ref{fig:ntreeDAG}.

\subsection{Non-Redundant Neural Architecture (DAG-MLP)}\label{sec:dagmlp}
We propose a neural architecture to compute embeddings for nodes in a merge DAG, allowing to retrieve embeddings of the contained trees from their roots. The process involves a preprocessing step that transforms the node labels, using $\text{MLP}_{0}$ to map them to an embedding space of fixed dimensions. Subsequently, an $\text{MLP}_{i}$ is used to process nodes at each layer $\mathcal{L}_i$.
\begin{align*}
\mu^\prime(v) &= \text{MLP}_0\left(\mu(v)\right), & \forall v \in \mathcal{V}  \\
x(v) &= \mu^\prime(v), & \forall v \in \mathcal{L}_0 \\
x(v) &= \text{MLP}_i \left(\left( 1 + \epsilon_i \right) \cdot \mu^\prime(v) + \sum_{\forall u\colon  uv \in \mathcal{E}_i} x(u) \right), & \forall v \in \mathcal{L}_i, i\in\{1,\dots,n\}   
\end{align*}
The DAG-MLP can be computed through iterated matrix-vector multiplication analogous to standard GNNs. Let $\mathbf{L}_i$ be a square matrix with ones on the diagonal at position $j$ if $v_j \in \mathcal{L}_i$, and zeros elsewhere. Let $\mathbf{E}_i$ represent the adjacency matrix of $(\mathcal{V}, \mathcal{E}_i)$, and let $\mathbf{F}$ denote the node features of $\mathcal{V}$, corresponding to the initial node labels. The transformed features $\mathbf{F^\prime}$ are obtained through $\text{MLP}_0$, and $\mathbf{X}^{[i]}$ represents the updated embeddings at layer $i$ of the DAG.
\begin{alignat*}{2}
\label{equ:dag_mlp_matrix_formulation}
\mathbf{F^\prime} &= \text{MLP}_0\left(\mathbf{F}\right), \quad
\mathbf{X}^{[0]}  \;= \mathbf{L}_0 \mathbf{F^\prime}, \\
\mathbf{X}^{[i]}  &= \text{MLP}_i \left(\left( 1 + \epsilon_i \right) \cdot \mathbf{L}_i \mathbf{F^\prime} + \mathbf{E}_i \mathbf{X}^{[i-1]}\right) + \mathbf{X}^{[i-1]}\,,   
\end{alignat*}
In the above equation, $\text{MLP}_i$ is applied to the rows associated with nodes in $\mathcal{L}_i$. The embeddings $\mathbf{X}^{[i]}$ are initialized to zero for inner nodes and computed level-wise. To preserve embeddings from all previous layers, we add $\mathbf{X}^{[i-1]}$ during the computation of $\mathbf{X}^{[i]}$.
Suppose the merge DAG $(D,\xi)$ contains the trees $\{T_1, \dots, T_n\}$. We obtain a node embedding $\mathbf{X}^{[n]}_{\xi(i)}$ for each tree $T_i$ with $i\in\{1,\dots, n\}$.
This approach allows for obtaining the final embedding for a vertex by using a single tree (Fixed Single-Height) or combining trees of different heights, for example all NTs of size up to a certain maximum (Combine Heights). %
Further details on the resulting architecture are described in Appendix~\ref{sec:dag_mlp_graph_class_architecture}.

\subsection{Expressivity of k-NTs}
\label{sec:1nt_expressivity}

Here, we investigate how expressive $k$-NTs are compared to unfolding trees.
While it is evident that $k$-NTs are a node invariant, providing the same result for nodes that can be mapped to each other by an isomophism or automorphism, they might also produce the same results for nodes that cannot. This means that, similar to unfolding trees, they are not a complete node invariant.

We show that there are vertices that $1$-WL cannot distinguish, but $k$-NTs can, and vice versa, proving that both methods are incomparable regarding their expressivity on node level. We further conjecture that, while $k$-NTs cannot distinguish certain vertices that $1$-WL can, they can still distinguish the graphs containing such vertices, making $k$-NTs more expressive on the graph level.

\begin{theorem}\label{thm:expr}
The expressivity of $k$-NT and unfolding trees is incomparable, i.e.,
\begin{align}
\text{1. }&\exists u,v\colon  F_\infty^u=  F_\infty^v \wedge T_{\infty,k}^u \neq T_{\infty,k}^v. \nonumber\\ 
\text{2. }&\exists u,v\colon  F_\infty^u \neq  F_\infty^v \wedge T_{\infty,k}^u = T_{\infty,k}^v \nonumber
\end{align}
\end{theorem}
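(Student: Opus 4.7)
The plan is to prove each direction by exhibiting an explicit pair of witness graphs, and to indicate how the constructions extend to arbitrary $k$.

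For the first claim, I would pick $u \in V(C_3)$ and $v \in V(C_n)$ for any $n \geq 4$. Both cycles are $2$-regular, so their infinite unfolding trees are isomorphic to the rooted tree with branching factor two at every node, giving $F_\infty^u \simeq F_\infty^v$. On the other hand, $T_{\infty,k}^u$ has depth at most $k+1$ since every vertex of $C_3$ lies at distance at most $1$ from $u$, whereas $T_{\infty,k}^v$ reaches depth $\lfloor n/2 \rfloor + k \geq k+2$ because $C_n$ contains vertices at BFS distance at least $2$ from $v$. The two $k$-NTs therefore differ in depth, establishing the first claim for every $k \geq 0$.

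For the second claim, I would exhibit vertices whose $k$-NTs coincide but whose unfolding trees differ. For $k = 0$, the prototype is vertex $b$ in the triangle $K_3 = (\{a, b, c\}, \{ab, bc, ca\})$ versus vertex $b$ in the path $P_3 = a - b - c$: in both graphs every vertex other than $b$ lies at distance exactly $1$ from $b$ and none at greater distance, so $T_{\infty,0}^b$ is the rooted tree with two leaf children in both cases. Yet WL separates the two $b$'s at the first refinement round, because $b$'s neighbors have degree $2$ in $K_3$ but degree $1$ in $P_3$; hence $F_\infty^b(K_3) \not\simeq F_\infty^b(P_3)$. For general $k$, an analogous separation should be obtainable by placing the WL-distinguishing perturbation in a region invisible to the $k$-NT, for instance by attaching the $K_3$-versus-$P_3$ gadget at the far end of a long pendant path beyond the $+k$ slack window.

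The principal obstacle is the second claim for general $k$. As $k$ grows, $T_{\infty,k}$ captures progressively more of the graph's edge set---forward BFS edges at $k = 0$, same-level BFS edges at $k = 1$, and backward edges once $k \geq 2$---so the WL-distinguishing discrepancy must hide in structural features that remain outside the $k$-NT's effective scope yet still propagate to the distinguished vertex under iterated WL refinement. Constructing a parametrized family in which the two graphs agree on a sufficiently large ball around the chosen vertex and differ only outside it is the natural route; the core technical step is verifying that no contribution of the hidden gadget leaks into the $k$-NT via a short path in the original graph, while also certifying that WL does not need to see the gadget in order to distinguish $u$ from $v$.
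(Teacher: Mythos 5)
Your overall strategy---prove both directions by explicit counterexample pairs---is exactly what the paper does (it uses a hexagon versus two disjoint triangles for part 1, and a concrete pair of graphs with identical $1$-NTs but different unfolding trees for part 2). Your part 1 is essentially sound, but the clause ``any $n\geq 4$'' is too generous: for $n=4$ and $k=1$ the $1$-NTs of a $C_3$-vertex and a $C_4$-vertex are in fact isomorphic (both are a root with two children, each having a single leaf child), and for odd $k$ the bipartiteness of $C_4$ makes the claimed depth $\lfloor n/2\rfloor+k$ unattainable by parity. Taking $n\geq 5$ repairs this: in $C_5$ one can oscillate between the two adjacent vertices at distance $2$ to realize depth $2+k$, which exceeds the bound $1+k$ valid for $C_3$, so the $k$-NTs differ for every $k$.

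The genuine gap is part 2 for general $k$, which you yourself flag as the principal obstacle: your $K_3$-versus-$P_3$ example settles only $k=0$, and the proposed route to larger $k$ rests on a false premise. There is no region of a connected graph that is ``invisible'' to the $k$-NT: every vertex $x$ appears in $T^v_{\infty,k}$ at depth $\operatorname{dist}(v,x)$, because nodes along a shortest path always satisfy the retention condition $\depth(u)\leq\depth(w)+k$. Hence attaching a WL-distinguishing gadget at the far end of a long pendant path does not hide it from the $k$-NT; both trees will contain the gadget's vertices, and in general their $k$-NTs will then differ as well. What the $k$-NT actually loses is the ability to re-expand a vertex more than $k$ levels below its first occurrence, which caps its height at eccentricity plus $k$; so the separating feature must be one that only deeper unfolding-tree levels (longer walks) reveal, while the two vertices look identical within that height-bounded, window-restricted view. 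This is how the paper's example for part 2 is constructed (two graphs whose marked vertices have equal $1$-NTs yet non-isomorphic unfolding trees), and note that even the paper exhibits this only for $k=1$; your proposal as it stands neither supplies such a construction for general $k$ nor a correct mechanism for producing one.
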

\begin{proof}
We prove the statement by giving concrete examples. In Figure~\ref{ex:hex} an example for 1. is given: as commonly known, the Weisfeiler-Leman algorithm is unable to distinguish the two graphs, indicating identical unfolding trees of the vertices in the two graphs. However, for any $k$ the $k$-NT of the vertices will differ for $h \geq k+2$.
In Figure~\ref{ex:ce} an example for 2. (for $1$-NTs) is given: the unfolding trees of the two marked vertices differ, while the $1$-NTs are identical. This serves as an instance where $1$-WL  can distinguish vertices that $k$-NTs cannot.
\end{proof}

\begin{figure}[tb]
\centering
\begin{subfigure}[b]{0.2\linewidth}\centering
	\includegraphics[width=0.5\textwidth]{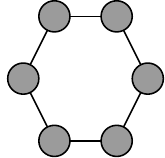}
	\subcaption{Hexagon}\label{fig:hexagon}
\end{subfigure}
\begin{subfigure}[b]{0.25\linewidth}\centering
	\includegraphics[width=0.5\textwidth]{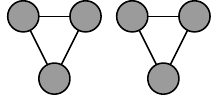}
	\subcaption{Two triangles}\label{fig:triangles}
\end{subfigure}
\begin{subfigure}[b]{0.3\linewidth}\centering
	\includegraphics[width=0.5\textwidth]{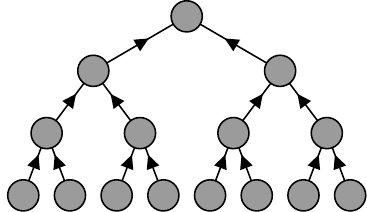}
	\subcaption{Unfolding trees}\label{fig:unfolding}
\end{subfigure}
\begin{subfigure}[b]{0.22\linewidth}\centering
	\includegraphics[width=0.5\textwidth]{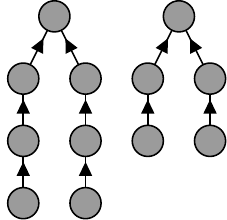}
	\subcaption{$1$-NTs}\label{fig:nts}
\end{subfigure}
\caption{Two graphs (\subref{fig:hexagon}), (\subref{fig:triangles}) that cannot be distinguished by unfolding trees, but by $k$-NTs. Figure~(\subref{fig:unfolding}) shows the unfolding tree $F_3$, which is the same for all vertices of both graphs, while (\subref{fig:nts}) shows the $1$-NTs of the vertices in the hexagon (left) and the triangle (right).}
\label{ex:hex}
\end{figure}

\begin{figure}[tb]
\centering
\begin{subfigure}[b]{0.2\linewidth}\centering
	\includegraphics[width=0.5\textwidth]{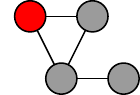}
	\subcaption{$G_1$}\label{fig:ce_g1}
\end{subfigure}
\begin{subfigure}[b]{0.25\linewidth}\centering
	\includegraphics[width=0.5\textwidth]{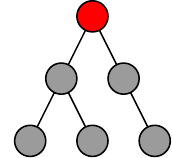}
	\subcaption{$G_2$}\label{fig:ce_g2}
\end{subfigure}
\begin{subfigure}[b]{0.24\linewidth}\centering
	\includegraphics[width=0.5\textwidth]{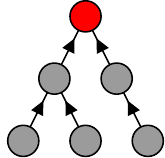}
	\subcaption{$1$-NTs of red vertices}\label{fig:ce_ntree}
\end{subfigure}
\caption{Two graphs (\subref{fig:ce_g1}), (\subref{fig:ce_g2}) in which the red vertices can be distinguished by unfolding trees, but not by $k$-NTs. Figure~(\subref{fig:ce_ntree}) shows the $1$-NTs of the red vertices, which are the same. However, $G_1$ and $G_2$ can be distinguished by their multisets of $1$-NTs.}
\label{ex:ce}
\end{figure}

We have shown that on node level, the expressivity of $k$-NTs and unfolding trees is incomparable. However, the examples, where $k$-NTs fail to distinguish nodes that $1$-WL can, can actually be distinguished on the graph level. This arises from the fact, that the graphs have a different number of vertices and the $k$-NTs of the other nodes differ.

\subsection{Computational Complexity, Expressivity and Oversquashing}
\label{sec:complexity}
The DAG representing the $k$-NT of a single vertex has a size in $O(mk+m)$, where $m$ is the number of edges in the graph. 
The lexicographic encoding and merging of $k$-NTs to generate the DAG can be done in time linear in its size.
A trivial upper bound on the size of the merge DAG of a graph with $n$ nodes and $m$ edges is $O(nmk + nm)$. Overall, this means that preprocessing can be done in $O(nmk)$ time, where $k$ can be considered constant. For $0$- and $1$-NTs, we obtain time $O(nm)$. Table~\ref{tab:complexity} compares the complexity and expressivity of our method to related work. 

Understanding and relating the expressivity of different approaches is non-trivial. In the case of PathNet, its expressivity concerning the WL-hierarchy remains unexplored. PathNN-SP+ has been shown to be more expressive than $1$-WL. While it is claimed that RFGNNs are maximally expressive, the proof claiming higher expressivity on node level as presented in~\citet[Lemma~7]{NEURIPS2022_1bd6f176} is not correct (rather it is incomparable on node level, as detailed in Appendix~\ref{sec:rfgnn}). Consequently, it remains uncertain, whether RFGNN is strictly more expressive on graph level. PathNN-SP~\cite{michel_expressive_2023} states that it can only disambiguate graphs at least as well as $1$-WL and is not strictly more powerful. This is because, due to sampling, isomorphic graphs could be mapped to different representations, indicating that it is not a graph invariant.

RFGNN and PathNN-SP+ involve enumerating all possible paths and all shortest paths, respectively. A straightforward example of a graph consisting of a chain of joined 4-cycles, shows that there is an exponential number of such paths, and that these paths still contain redundancy. In our approach, we avoid this redundancy by not explicitly building NTs, but instead generate DAGs, resulting in a much more compact representation (refer to Appendix~\ref{sec:compacttrees}).

In Appendix~\ref{app:oversquash}, we investigate the theoretical connection of our method and RFGNN in terms of relative influence, and show that $0$- and $1$-NTs can address the oversquashing problem more effectively.

To summarize, our method is the first to address both types of redundancy in GNNs, informational and computational redundancy, with polynomial running time, and can mitigate oversquashing better than comparable approaches.
Therefore, our method addresses a gap in GNN research that has not been previously considered.

\section{Experimental Evaluation}
\label{sec:experimental_evaluation}

We assess the performance of DAG-MLP with $k$-NTs %
on a range of synthetic~\citep{abboud2021surprising, murphy2019relational} and real-world datasets~\citep{Craven1998LearningTE, McCallum2000AutomatingTC, Giles1998CiteSeerAA, Sen2008CollectiveCI, Datasets} (additional details can be found in Appendix~\ref{app:datasets}).\footnote{Our implementation is available (anonymized) at \url{https://anonymous.4open.science/r/k-RedundancyGNNs/} and will be uploaded to GitHub upon publication.}

\textbf{Experimental setup.} For synthetic datasets, we determine the number of layers in DAG-MLP based on the average graph diameter, ensuring effective aggregation during message propagation. %
The embeddings at each layer are obtained using readouts, concatenated, and then fed through two learnable linear layers for prediction.
In the evaluation of TUDataset, we adopt the 10-fold splits proposed by~\cite{Errica2020A}, allowing a grid search for optimal hyper-parameters on each dataset. The architecture for combined heights involves using each ``readout$_i$'' to extract the embeddings for each layer, with mean of the average-pooled embeddings being passed to a final MLP layer responsible for prediction (see Appendix~\ref{sec:dag_mlp_graph_class_architecture}). For the fixed single-height architecture, only the last readout is used, pooled, and passed to the final MLP layer. Further details on hyper-parameters can be found in Appendix~\ref{sec:hyperparameter}. %

\textbf{Graph classification.}
Table~\ref{tab:expressivity_results} presents the results on synthetic expressivity datasets. %
In line with our theoretical expectations, the experimental results support our hypothesis: NTs exhibit greater expressivity than GIN on a graph level. However, a formal theoretical proof of this observation remains a direction for future work.

\begin{table}[t]
\centering
\small
\caption{Average %
	classification accuracy for EXP-Class and CSL across $k$-folds (4-folds and 5-folds), and the number of indistinguishable pairs of graphs in EXP-Iso. Best results are highlighted in {\setlength{\fboxsep}{1pt}\colorbox{lightgray}{gray}}, best results from methods with polynomial time complexity are highlighted in \textbf{bold}.}
\label{tab:expressivity_results}
\begin{tabular}{lccc}
	\toprule
	\textbf{Model} & \textbf{EXP-Class} $\uparrow$ & \textbf{EXP-Iso} $\downarrow$ & \textbf{CSL} $\uparrow$ \\
	\midrule
	GIN~\citep{DBLP:journals/corr/abs-1810-00826} & \phantom{0}50.0 $\pm$ 0.0 & 600 & \phantom{0}10.0 $\pm$ 0.0\phantom{0} \\
	3WLGNN~\citep{maron2019provably} & \cellcolor{lightgray}\textbf{100.0} $\pm$ \textbf{0.0} & \cellcolor{lightgray}\textbf{0} & \phantom{0}97.8 $\pm$ 10.9 \\
	PathNN-$\mathcal{SP}^+$~\citep{michel_expressive_2023} & \cellcolor{lightgray}100.0 $\pm$ 0.0 & \cellcolor{lightgray}0 & \cellcolor{lightgray}100.0 $\pm$ 0.0 \\
	PathNN-$\mathcal{AP}$~\citep{michel_expressive_2023} & \cellcolor{lightgray}100.0 $\pm$ 0.0 & \cellcolor{lightgray}0 & \cellcolor{lightgray}100.0 $\pm$ 0.0 \\
	\midrule
	DAG-MLP ($0$-NTs) & \cellcolor{lightgray}\textbf{100.0} $\pm$ \textbf{0.0} & \cellcolor{lightgray}\textbf{0} & \cellcolor{lightgray}\textbf{100.0} $\pm$ \textbf{0.0}\phantom{0} \\
	DAG-MLP ($1$-NTs) & \cellcolor{lightgray}\textbf{100.0} $\pm$ \textbf{0.0} & \cellcolor{lightgray}\textbf{0} & \cellcolor{lightgray}\textbf{100.0} $\pm$ \cellcolor{lightgray}\textbf{0.0}\phantom{0} \\
	\bottomrule
\end{tabular}
\end{table}

In Table~\ref{tab:exp_case_study}, we investigate the impact of the parameter $k$ %
and the number of layers $l$ on the accuracy %
for the EXP-Class dataset.
Cases where $k > l$ can be disregarded, as the computation for NTs remains the same as when $k = l$. %
Empirically, $0$- and $1$-NTs yield the highest accuracy.
This observation aligns with our discussions on expressivity in Section~\ref{sec:1nt_expressivity}. The decrease in accuracy with increasing $k$ indicates that information redundancy leads to oversquashing. We investigate this theoretically in Appendix~\ref{app:oversquash}.

\begin{table}[tb]
\small
\caption{Average
	accuracy for DAG-MLP using 4-fold cross-validation on EXP-Class~\citep{abboud2021surprising}, evaluated with varying number of layers.}
\label{tab:exp_case_study}
\centering
\begin{tabular}{lcccccc}
	\toprule
	$k$-\textbf{NTs} & \textbf{1 layer} & \textbf{2 layers} & \textbf{3 layers} & \textbf{4 layers} & \textbf{5 layers} & \textbf{6 layers} \\
	\midrule
	$0$-NTs & 51.1 $\pm$ 1.6 & 57.5 $\pm$ 6.6 & 91.7 $\pm$ 11.6 & 99.7 $\pm$ 0.3 & \textbf{100.0} $\pm$ \textbf{0.0} & \textbf{100.0} $\pm$ \textbf{0.0} \\
	$1$-NTs & 50.1 $\pm$ 0.2 & 58.9 $\pm$ 4.6 & 59.4 $\pm$ 5.7 & 99.6 $\pm$ 0.5 & 99.9 $\pm$ 0.2 & \textbf{100.0} $\pm$ \textbf{0.0} \\
	$2$-NTs & - & 52.6 $\pm$ 3.4 & 54.9 $\pm$ 5.3 & 52.4 $\pm$ 3.8 & 97.6 $\pm$ 1.9 & \textbf{100.0} $\pm$ \textbf{0.0} \\
	$3$-NTs & - & - & 56.2 $\pm$ 5.7 & 51.1 $\pm$ 1.9 & 52.4 $\pm$ 4.1 & 87.1 $\pm$ 21.4 \\
	$4$-NTs & - & - & - & 50.1 $\pm$ 0.2 & 50.6 $\pm$ 1.0 & 50.4 $\pm$ 0.7 \\
	$5$-NTs & - & - & - & - & 50.4 $\pm$ 0.7 & 50.0 $\pm$ 0.0 \\
	$6$-NTs & - & - & - & - & - & 53.2 $\pm$ 5.2 \\
	\bottomrule
\end{tabular}
\end{table}

For TUDataset, we report the accuracy compared to related work in Table~\ref{tab:results_tudatasets}. We report only the best results for the different parameter combinations reported in~\citet{michel_expressive_2023}, and the best result for our different combine methods. Due to the high standard deviation across all methods, we present a statistical box plot for the accuracy based on three runs on the test set of 10-fold cross-validation in Appendix~\ref{app:additionalexp}. We group the methods by their time complexity. Note that, while PathNN performs well on ENZYMES and PROTEINS, the time complexity of this method is exponential. Therefore, we also highlight the best method with polynomial time complexity. For IMDB-B and IMDB-M, which have small diameters, we see that $k$-NTs outperform all other methods. For ENZYMES a variant of our approach achieves the best result among the approaches with non-exponential time complexity, and $k$-NTs lead to a significant improvement over GIN.

\begin{table*}[tb]
\small
\caption{
	Classification %
	accuracy ($\pm$ standard deviation) over 10-fold cross-validation on the datasets from TUDataset, taken from~\citet{michel_expressive_2023}. Best performance is highlighted in {\setlength{\fboxsep}{1pt}\colorbox{lightgray}{gray}}, best results from methods with polynomial time complexity are highlighted in \textbf{bold}. ``-'' denotes not applicable and ``NA'' means not available.}
\label{tab:results_tudatasets}
\centering
\begin{tabular}{clcccc}
	\toprule
	&\textbf{Model} & \textbf{IMDB-B} & \textbf{IMDB-M} & \textbf{ENZYMES} & \textbf{PROTEINS} \\
	\midrule
	\multirow{4}{0.05cm}{\rotatebox{90}{Linear}}&%
	GIN~\citep{DBLP:journals/corr/abs-1810-00826} & 71.2 $\pm$ 3.9 & 48.5 $\pm$ 3.3 & 59.6 $\pm$ 4.5 &  73.3 $\pm$ 4.0 \\
	&GAT~\citep{velivckovic2018graph} & 69.2 $\pm$ 4.8 & 48.2 $\pm$ 4.9 & 49.5 $\pm$ 8.9 & 70.9 $\pm$ 2.7 \\
	&SPN ($l=1$)~\citep{AbboudDC22} & NA & NA & 67.5 $\pm$ 5.5 & 71.0 $\pm$ 3.7 \\
	&SPN ($l=5$)~\citep{AbboudDC22} & NA & NA & 69.4 $\pm$ 6.2 & \textbf{74.2} $\pm$ \textbf{2.7} \\
	\midrule
	\multirow{3}{0.05cm}{\rotatebox{90}{Exp}}&%
	PathNet~\citep{sun2022beyond} & 70.4 $\pm$ 3.8 & 49.1 $\pm$ 3.6 & 69.3 $\pm$ 5.4 & 70.5 $\pm$ 3.9 \\
	&PathNN-$\mathcal{P\phantom{S}}\phantom{^+}$~\citep{michel_expressive_2023} & 72.6 $\pm$ 3.3 & 50.8 $\pm$ 4.5 & \cellcolor{lightgray}73.0 $\pm$ 5.2 &\cellcolor{lightgray}75.2 $\pm$ 3.9\\
	&PathNN-$\mathcal{SP}^+$~\citep{michel_expressive_2023} & - & - & 70.4 $\pm$ 3.1 & 73.2 $\pm$ 3.3 \\
	\midrule
	\multirow{2}{0.05cm}{\rotatebox{90}{Ours}}
 &DAG-MLP ($0$-NTs) & \cellcolor{lightgray}\textbf{72.9} $\pm$ \textbf{5.0} & 50.2 $\pm$ 3.2 & 67.9 $\pm$ 5.3  & 70.1 $\pm$ 1.7 \\
	&DAG-MLP ($1$-NTs)& 72.4 $\pm$ 3.8 & \cellcolor{lightgray}\textbf{51.3} $\pm$ \textbf{4.4} & \textbf{70.6} $\pm$ \textbf{5.5} & 70.2 $\pm$ 3.4 \\
	\bottomrule
\end{tabular}
\end{table*}

\textbf{Node classification.}
We investigate the performance of our approach on node classification datasets. These datasets differ regarding their homophily ratio, i.e., the fraction of edges in a graph that connect vertices with the same class label~\cite{DBLP:conf/nips/ZhuYZHAK20}. Heterophily tasks are particularly challenging for standard GNNs~\cite{DBLP:conf/nips/ZhuYZHAK20} as they require capturing the structure of neighborhoods instead of “averaging” over the neighboring features. In Table~\ref{tab:node_classification} we present results from~\cite{DBLP:conf/cikm/GiraldoSBM23} including the state-of-the-art graph rewiring technique SJLR combined with SGC and GCN, which performs best in the evaluation. We also performed experiments with GIN and DAG-MLP using the same data splits as~\cite{DBLP:conf/cikm/GiraldoSBM23} to ensure a fair comparison. We report the best results for $l$ layers with $l \in \{2,3,4\}$ and four different combine methods for GIN and DAG-MLP.

\begin{table}[tb]
    \centering
    \caption{Accuracy and standard deviation on node classification tasks (GCN, SJLR-GCN, SGC and SJLR-GCN taken from~\cite{DBLP:conf/cikm/GiraldoSBM23}). %
    }
    \label{tab:node_classification}
    \scalebox{0.85}{
    \begin{tabular}{cccccccc}
    \toprule
      & \textbf{Model}& \textbf{Texas} &  \textbf{Wisconsin} &\textbf{Cornell}&\textbf{Cora} &  \textbf{CiteSeer} &\textbf{PubMed}\\
       &  Homophily ratio & 0.11  & 0.21&0.3 &0.8& 0.74  & 0.8    \\
         \midrule
       &  GCN & 58.05 $\pm$ 0.9 & 52.10 $\pm$ 0.9 &  67.34 $\pm$ 1.5&  81.81 $\pm$ 0.2 &  68.35 $\pm$ 0.3 &  78.25 $\pm$ 0.3 \\
       &  SJLR-GCN & 60.13 $\pm$ 0.8&  55.16 $\pm$ 0.9 &  71.75 $\pm$ 1.5 &  \textbf{81.95 $\pm$ 0.2} & \textbf{69.50 $\pm$ 0.3} &  \textbf{78.60 $\pm$ 0.3} \\
       &  SGC &  56.69 $\pm$ 1.7&  47.90 $\pm$ 1.7 & 53.40 $\pm$ 2.1&  76.90 $\pm$ 1.3&  67.45 $\pm$ 0.8& 71.79 $\pm$ 2.1 \\
       &  SJLR-SGC & 58.40 $\pm$ 1.4&  55.42 $\pm$ 0.9 &  67.37 $\pm$ 1.6 &  81.24 $\pm$ 0.7& 68.39 $\pm$ 0.6 &  76.28 $\pm$ 0.9 \\
         \midrule
         &
         GIN   &  73.78 $\pm$ 6.0  &   71.76 $\pm$ 5.1  &  60.81 $\pm$ 8.5 &  76.76 $\pm$ 1.4&   64.49 $\pm$ 1.5 &  76.46 $\pm$ 1.1 \\
         &DAG-MLP ($0$-NTs)&  \textbf{85.68 $\pm$ 4.8}  &   81.35 $\pm$ 4.1  &  79.02 $\pm$ 6.8 & 74.01 $\pm$ 2.0&   60.55 $\pm$ 3.6  &  75.33 $\pm$ 1.1 \\
         &DAG-MLP ($1$-NTs)&  80.54 $\pm$ 6.0  &   \textbf{81.62 $\pm$ 3.4}  &  \textbf{79.41 $\pm$ 4.6} & 74.54 $\pm$ 1.4&  61.09 $\pm$ 1.5 &  75.53 $\pm$ 1.1 \\
         \bottomrule
    \end{tabular}
    }
\end{table}

As observed, DAG-MLP outperforms GIN on the heterophily datasets (those with low homophily ratio), while GIN performs better on homophily ones. These results indicate that neighborhood trees can capture the relevant neighborhood structure more accurately than unfolding trees used by GIN. Additionally, our method outperforms SJLR on the two heterophily datasets Texas and Wisconsin by a large margin.

\section{Conclusion}
We introduce a neural tree canonization technique and combine it with neighborhood trees, which are pruned versions of unfolding trees used by standard MPNNs. By merging trees in a DAG, we create compact representations that serve as the foundation for our neural architecture termed DAG-MLP. %
It inherits the advantageous properties of the GIN architecture, while being more expressive than $1$-WL on many graphs. Notably, our method is only less expressive on node level for specific examples.
Our work contributes general techniques for constructing compact computation DAGs for tree structures that encode node neighborhoods. This exploration reveals a complex interplay between information redundancy, computational redundancy, and expressivity. The delicate balance of these factors is an avenue for future work.

\section*{Acknowledgments}
We would like to thank Christian Permann for his contribution to the conception of neighborhood trees and their efficient generation.
This work was supported by the Vienna Science and Technology Fund (WWTF) [10.47379/VRG19009]. 
The computational results presented have been achieved in part using the Vienna Scientific Cluster (VSC).

\section*{Author Contributions}
NK devised the project and the main conceptual ideas.
FB made significant contributions to the conception of $k$-redundant neighborhood trees and their efficient generation.
NK and FB jointly developed the methods for redundancy removal presented in Sections~\ref{sec:canon},~\ref{sec:comp_redundancy}.
FB developed Theorem~\ref{thm:expr}, and implemented the $k$-redundant neighborhood trees, merge DAGs, and algorithmic components of the implementation.
NK, SM, and FB collaborated on the development of the DAG-MLP architecture.
SM implemented DAG-MLP, conducted the experimental evaluation, and wrote parts of the corresponding sections in the manuscript. SM extended support to directed graphs and graphs with edge attributes, implemented the learning pipeline, wrapped the DAG to be used within the learning pipeline, and configured the necessary environments to reproduce results.
FB and NK jointly drafted the manuscript with input from all authors.
FB revised the manuscript with feedback from reviewers.
NK, WG, and JL supervised the project.
All authors provided critical feedback, participated in discussions, contributed to the interpretation of the results, and approved the final manuscript.

\bibliography{lit.bib}

\newpage
\appendix

\section{Comparison to RFGNN and TPTs}\label{sec:rfgnn}
In RFGNN~\citep{NEURIPS2022_1bd6f176}, truncated epath trees (TPTs) are introduced to represent the information flow with the goal of reducing redundancy. While the motivation aligns with our approach, there are significant differences between TPTs and $k$-redundant neighborhood trees and the computational properties of the techniques. In RFGNN the focus is solely on reducing redundancy in information flow, not computation. Additionally, the definition of TPTs allows for much more redundancy than that of $k$-NTs. We first introduce the concepts used by~\citet{NEURIPS2022_1bd6f176}, and then discuss differences and disadvantages in detail.

An \emph{epath} is defined as a path with no repeated vertices, except the starting vertex, which is allowed to be the ending vertex if the length of the epath is larger than $2$.
\begin{definition}[Truncated ePath Tree~\citep{NEURIPS2022_1bd6f176}]\label{def:tpt}
Given graph $G$ and $v \in V(G)$, the TPT $TPT^h_{G,v}$ with height $h$ is an epath search tree obtained by running a BFS from $v$, where all epaths of length up to $h$ are accessed.
\end{definition}

Firstly, the definition of TPTs allows for vertices to redundantly appear multiple times in a tree. If a vertex appeared at depth $1$, for example, it can still appear elsewhere in the TPT, but not as its own descendant.  
In TPTs, parts of paths that differ will be repeated, without the ability to compress them. In contrast, neighborhood trees allow for compressed representations, as demonstrated in Appendix~\ref{sec:compacttrees}.
\citet{NEURIPS2022_1bd6f176}, Lemma~7 claims that TPTs are more expressive than unfolding trees of the same height by providing two example graphs, which can also be distinguished by $1$-NTs of the same height, cf.~Theorem~\ref{thm:expr}. \citet{NEURIPS2022_1bd6f176}, Lemma~6 states, that for any two nodes, if the unfolding trees of height $k$ differ, the TPTs of height $k$ differ as well, however, this is not true. Figure~\ref{ex:ce_rfgnn} shows two nodes that can be distinguished by their unfolding trees (of height $\geq 4$), but not by their TPTs. The problem is the same as for NTs - they have a maximum height, whereas unfolding trees will grow indefinitely. Hence, TPTs are not strictly more expressive than unfolding trees on node level, for the same reason that $k$-NTs are not. Unlike NTs, TPTs have several other disadvantages.

The size and running time complexity of RFGNN are very restrictive. While the term BFS in the definition implies linear running time, the BFS has to be modified, leading to exponential running time.
The compression of TPTs (or even the forest of TPTs for the vertices of a graph) is not discussed in the publication, making preprocessing and computation much more time-consuming.
In the experimental evaluation, only TPTs up to height $3$ are used due to resource-intensity, indicating that the full expressivity of TPTs cannot be utilized in practice. This limitation is also reflected in the experimental results in~\citep{NEURIPS2022_1bd6f176}, where the outcomes using RFGNN are only marginally better.

\begin{figure}[tb]
\centering
\begin{subfigure}[b]{0.25\linewidth}\centering
	\includegraphics[width=0.4\textwidth]{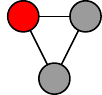}
	\subcaption{$G_1$}\label{fig:ce_rfgnn_g1}
\end{subfigure}
\begin{subfigure}[b]{0.25\linewidth}\centering
	\includegraphics[width=0.4\textwidth]{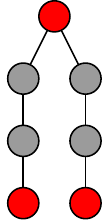}
	\subcaption{$G_2$}\label{fig:ce_rfgnn_g2}
\end{subfigure}
\begin{subfigure}[b]{0.25\linewidth}\centering
	\includegraphics[width=0.4\textwidth]{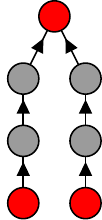}
	\subcaption{TPTs of red vertices}\label{fig:ce_tpt}
\end{subfigure}
\caption{Two graphs (\subref{fig:ce_rfgnn_g1}), (\subref{fig:ce_rfgnn_g2}) in which the red vertices can be distinguished by unfolding trees, but not by TPTs. Figure~(\subref{fig:ce_tpt}) shows the TPTs of the red vertices (the red vertex in the middle of $G_2$), which are the same.}
\label{ex:ce_rfgnn}
\end{figure}

With our proposed approach, we address not only redundancy in information flow using $k$-NTs, but also remove redundancy in computation by incorporating merge DAGs. This strategy enables our approach to reach its full expressive potential in practice while maintaining a reasonable running time.

\section{Theoretical Analysis of Oversquashing in Comparison with RFGNN}
\label{app:oversquash}
Several authors~\cite{NEURIPS2022_1bd6f176,DBLP:conf/icml/XuLTSKJ18,ToppingGC0B22,DBLP:conf/icml/GiovanniGBLLB23,Black23} developed and refined techniques to measure the influence of a vertex $v$ with an initial vertex feature $\mathbf{x}_v$ on the output $\mathbf{h}_u^{(k)}$ of a vertex $u$ after layer $k$ by the Jacobian $\partial \mathbf{h}_u^{(k)}/ \partial \mathbf{x}_v$. Following~\citet[Lemma 3]{NEURIPS2022_1bd6f176}, the relative influence of a node $v$ on a node $u$ in an MPNN is 
$$I(v,u)=
\mathbb{E}\left(\frac{\partial \mathbf{h}_u^{(k)}/ \partial \mathbf{x}_v}{\sum_{w\in V} \partial \mathbf{h}_u^{(k)}/ \partial \mathbf{x}_w}\right)=
\frac{[\hat{A}^k]_{u,v}}{\sum_{w\in V}[\hat{A}^k]_{u,w}},$$
where $\hat{A}=A+I$ is the adjacency matrix of the graph $G$ with added self-loops. Note that $[\hat{A}^k]_{u,v}$ is the number of walks of length $k$ from $u$ to $v$ (and vice versa) in $G$ with added self-loops.
Oversquashing occurs when $I(v,u)$ becomes small, indicating that only a small fraction of walks of length up to $k$ ending at $u$ start at $v$.

This idea can easily be linked to the concept of unfolding trees underlying our work. Consider the unfolding tree $F^u_k$ of vertex $u$ with height $k$. It follows from its construction that there is a bijection between walks of length at most $k$ ending at $u$ in $G$ and paths in $F^u_k$ from some node to the root (see~\cite{DBLP:conf/nips/Kriege22} for details on unfolding trees and walks). Therefore, pruning the unfolding tree has an effect on walk counts and, thus, on the relative influence. Consider the example in Figure~\ref{fig:unfoldingtree} and let $u$ be the red vertex (upper left) and $v$ the yellow vertex (lower right). We obtain a relative influence of $I_{\text{MPNN}}(v,u)= \frac{1}{8}$ for unfolding trees, and $I_{0\text{NT}}(v,u)= \frac{1}{4}$ for $0$-NTs, showing that NTs have the potential to reduce oversquashing.

We formally show that our method is less susceptible to oversquashing than MPNNs and RFGNN~\cite{NEURIPS2022_1bd6f176}. Consider a vertex $v$ and a vertex $u$ with shortest-path distance of $k$. To pass information from $v$ to $u$, at least $k$ layers are required. Comparing the unfolding tree (MPNN), the $0$- and $1$-NT (our approach) and the TPT (RFGNN), all of height $k$, reveals that the vertex $v$ occurs in the last level only, i.e., as a leaf of the tree, and the number of occurrences of $v$ is equal in all trees, since all walks and simple paths of length $k$ reaching $v$ are shortest paths. Hence, the numerator of the relative influence is equal for all methods. However, since $0$- and $1$-NTs are subtrees of unfolding trees, and $0$-NTs/$1$-NTs are subtrees of TPTs (they contain only shortest paths/some simple paths, instead of all simple paths), the total number of nodes in the trees, i.e., walks contributing to the denominator of the relevant information can be compared, obtaining 
$$I_{\text{MPNN}}(v,u)\leq I_{\text{TPT}}(v,u) \leq I_{1\text{NT}}(v,u) \leq I_{0\text{NT}}(v,u).$$
This theoretical analysis shows that our proposed method offers advantages in mitigating oversquashing, leveraging the formalization developed in recent papers. Additionally, it establishes a theoretical connection between the proposed approach and RFGNN, highlighting that our method more effectively addresses the oversquashing problem.

\section{The AHU Algorithm}\label{sec:ahu}
Aho, Hopcroft and Ullman describe a linear-time algorithm for deciding whether two rooted unordered trees are isomorphic~\cite[Section 3.2]{AhoHU74}. The algorithm forms the basis for our neural tree canonization technique, as discussed in Section~\ref{sec:canon}, and serves as a fundamental subroutine for combining trees into a single merge DAG, as detailed in Appendix~\ref{sec:mergetrees_appendix}. Here, we give a complete description of the original algorithm, its extension to trees with node labels or features, and the required modification for tree canonization.

In its original version, the algorithm solves the subtree isomorphism problem for two rooted unordered unlabeled trees $T_1$ and $T_2$. Algorithm~\ref{alg:ahu} shows the pseudocode of the algorithm.\footnote{For clarity of presentation, we adapted and simplified the textual description of the textbook~\citep{AhoHU74}. In contrast to the original description, our algorithm operates on the disjoint union of both trees instead of applying the same operations to $T_1$ and $T_2$ individually.}
First, the nodes in the disjoint union of the input trees $T_1 \cup T_2$ are partitioned into levels according to their depth distinguishing leaves and internal nodes, see Figure~\ref{fig:ahu}. Note that the levels are numbered in reverse order of depth, i.e., for a node $v$ on level $i$ the equality $\depth(v)=\height(T)-i$ holds. The lists $\mathcal{L}^*_i$ and $\mathcal{L}_i$ contain all leaves and internal nodes, respectively, on level $i$.
The labels $\ahu$ of the leaves are set to 0 and the tree is processed in bottom-up-fashion. In iteration $i$ of the for-loop, the labels of all nodes $\mathcal{L}_k$ for all $k<i$ have been computed and $\mathcal{L}_k$ is sorted according to them. Note that $\mathcal{L}^*_i$ contains only nodes $v$ with $\ahu(v)=0$ for all $0\leq i\leq \height(T)$. Tuples $\ahut(v)$ are generated for the nodes $v$ in $\mathcal{L}_i$ by iterating over $\mathcal{L}^*_{i-1}$ and then $\mathcal{L}_{i-1}$ appending the label of the current node to the tuple of its parent. Each tuple contains an integer label for each child and is in ascending order. In the next step, the nodes in $\mathcal{L}_i$ are sorted according to the tuples using radix sort. Then, the \textsc{Relabel} function assigns new integers $\ahu(v)$ to all nodes $v$ in $\mathcal{L}_i$ based on their tuples. Since $\mathcal{L}_i$ is sorted, all nodes with the same label form a contiguous sub-list. New integers are computed by scanning the list assigning 1 to the first entry and increasing the integer whenever the current tuple differs from the previous one. Using this approach, the \textsc{Relabel} function computes an injection between tuples and integers appearing for the nodes in $\mathcal{L}_i$. Two trees are isomorphic if and only if their nodes yield the same multiset of labels on all levels. Figure~\ref{fig:ahu} shows an example of two trees that are identified as isomorphic. The algorithm can be implemented in linear time by applying \textsc{RadixSort} to sort tuples of labels from a bounded range.

As noted by~\citet{AhoHU74}, the algorithm can be extended to trees with initial integer labels in range 1 to $n$ with $n=O(|V(T)|)$, by including the label of a node as the first element in its tuple. In this case, the \textsc{Relabel} function assigns integers that were not used as initial labels and the leaves in $\mathcal{L}^*_i$ have to be sorted according to their label after initialization. The overall running time remains linear.\footnote{A similar technique including level-wise processing, creation of tuples sorted by radix sort and relabeling has been proposed by~\citet[Section 2.1]{JMLR:v12:shervashidze11a} in the context of the Weisfeiler-Leman kernel to achieve a linear running time.} If the labels are not integers from a bounded range, e.g., continuous values, an initial mapping to integers is required, which can be realized by comparison-based sorting in $O(n \log n)$.

In order to generalize the method to tree canonization, it is no longer sufficient that the relabeling function is injective for the tuples appearing on each level, but it has to be injective for all possible tuples that can occur in \emph{any} tree. We discuss this situation in Section~\ref{sec:canon} and propose a learnable function with this property.

\begin{algorithm}[tb]
\caption{AHU algorithm for tree isomorphism}\label{alg:ahu}
\begin{algorithmic}
	\Function{Relabel}{nodes $\mathcal{L}$, labels $\ahut$, $\ahu$}
	\Comment{Replaces tuple labels $\ahut(v)$ by integer labels $\ahu(v)$ for the nodes $v$ in the list $\mathcal{L}=(l_1,l_2,\dots,l_N)$ sorted according to $\ahut$.}
	\State $\textsf{prev} \gets \ahut(l_1)$ 
	\State $k \gets 1$
	\For{$i \gets 1$ \textbf{to} $N$}
	\If{$\ahut(l_i) = \textsf{prev}$} 
	\State $\ahu(l_i) \gets k$
	\Else{}
	\State $k\gets k+1$  \Comment{Next integer label}
	\State $\ahu(l_i) \gets k$
	\EndIf
	\State $\textsf{prev} \gets \ahut(l_i)$ 
	\EndFor
	\Return $\ahu$
	\EndFunction
	\State
	\Function{TreeIsmorphism}{$T_1,T_2$}
	\State $T\gets T_1 \cup T_2$
	\State $H\gets \height(T)$
	\For{$i \gets 0$ \textbf{to} $H$}
	\State $\mathcal{L}^*_i \gets \{ v \in V(T) \mid \depth(v)=H-i \wedge \children(v)=\emptyset\}$ \Comment{Leaves with the same depth}
	\State $\mathcal{L}_i \gets \{ v \in V(T) \mid \depth(v)=H-i \wedge \children(v)\neq\emptyset\}$ \Comment{Non-leaves with the same depth}
	\EndFor
	\ForEach{ leaf $v \in V(T)$}
	\State $\ahu(v)\gets 0$ \Comment{Initialize labels for leaves}
	\EndFor
	\For{$i \gets 1$ \textbf{to} $H$}
	\ForEach{$l$ in ordered list $\mathcal{L}^*_{i-1}+\mathcal{L}_{i-1}$} \Comment{Iterate over concatenated ordered list}
	\State Append $\ahu(l)$ to the tuple $\ahut(p(l))$ \Comment{Pass label of previous level upwards}
	\EndFor
	\State $\mathcal{L}_{i}\gets$\textsc{RadixSort}($\mathcal{L}_{i}$, $\ahut$) \Comment{Sort list according to their tuples}
	\State $\ahu \gets$\textsc{Relabel}($\mathcal{L}_{i}$, $\ahu$, $\ahut$)
	\If{$\multiset{\ahu(v) \mid  (\mathcal{L}^*_i \cup \mathcal{L}_i) \cap V(T_1)}\neq\multiset{\ahu(v) \mid  (\mathcal{L}^*_i \cup \mathcal{L}_i) \cap V(T_2)}$ } %
	\Return \texttt{false}
	\EndIf
	\EndFor
	\Return \texttt{true}
	\EndFunction
\end{algorithmic}
\end{algorithm}

\begin{figure}
\begin{tikzpicture}[
level distance=1.3cm,
level 1/.style={sibling distance=2.2cm},
level 2/.style={sibling distance=.8cm},
level 3/.style={sibling distance=.6cm},
edge from parent/.style={draw},
vertex/.style = {circle,draw,fill=white,inner sep=1.5, outer sep=0},
label distance=-2
]
\draw[black!50!white,dashed] (-3, 0) node[left]{$3$} node[yshift=15, xshift=-5]{Level} -- (10, 0);
\draw[black!50!white,dashed] (-3,-1.3) node[left]{$2$} -- (10,-1.3);
\draw[black!50!white,dashed] (-3,-2.6) node[left]{$1$} -- (10,-2.6);
\draw[black!50!white,dashed] (-3,-3.9) node[left]{$0$} -- (10,-3.9);

\node [vertex, label={above right:$(1,2,3)$}] (r){$1$}
child {node [vertex, label={below right:$(2)$}] (a) {$1$}
	child {node [vertex, label={below right:$(0,0)$}] (b) {$2$} 
		child {node [vertex] (c) {$0$}}
		child {node [vertex] (d) {$0$}}
	}
}
child {node [vertex, label={below right:\,\,\,$(0,0,2)$}] (e) {$3$}
	child {node [vertex] (f) {$0$}}
	child {node [vertex, label={below right:$(0,0)$}] (g) {$2$}
		child {node [vertex] (h) {$0$}}
		child {node [vertex] (i) {$0$}}
	}
	child {node [vertex] (j) {$0$}}
}
child {node [vertex, label={below right:$(0,1)$}] (k) {$2$}
	child {node [vertex] (l) {$0$}}
	child {node [vertex, label={below right:$(0)$}] (m) {$1$}
		child {node [vertex] (n) {$0$}}
	}
};

\node [vertex,xshift=7cm,label={above right:$(1,2,3)$}] (R){$1$}
child {node [vertex,label={below right:$(0,1)$}] (K) {$2$}
	child {node [vertex, label={below right:$(0)$}] (M) {$1$}
		child {node [vertex] (N) {$0$}}
	}
	child {node [vertex] (L) {$0$}}
}
child {node [vertex, label={below right:\,\,\,$(0,0,2)$}] (E) {$3$}
	child {node [vertex, label={below right:$(0,0)$}] (G) {$2$}
		child {node [vertex] (H) {$0$}}
		child {node [vertex] (I) {$0$}}
	}
	child {node [vertex] (F) {$0$}}
	child {node [vertex] (J) {$0$}}
}
child {node [vertex, label={below right:$(2)$}] (A) {$1$}
	child {node [vertex, label={below right:$(0,0)$}] (B) {$2$}
		child {node [vertex] (C) {$0$}}
		child {node [vertex] (D) {$0$}}
	}
};
\end{tikzpicture}
\caption{Two isomorphic trees $T_1$ (left) and $T_2$ (right) and the labels $\ahu$ (inside each node) and $\ahut$ (right of each node) computed by the AHU algorithm.}
\label{fig:ahu}
\end{figure}

\section{Building Compact Trees}\label{sec:compacttrees}
Since unfolding trees can grow exponentially in size and our goal is to avoid redundant computation, we do not build unfolding trees and $k$-NTs explicitly. Rather, we build DAGs that represent them, corresponding to the merge DAG of only that tree using $L=\phi$. This way, the $k$-NTs can be generated by a simple, slightly modified BFS algorithm, and the size of $k$-NTs is in $O(|E(G)|\cdot(k+1))$, which means it is linear in the size of the input graph $G$.

\section{Merging Trees -- Algorithm}\label{sec:mergetrees_appendix}
\begin{algorithm}[tb]
\caption{Merging trees}\label{alg:merge}
\begin{algorithmic}
	\Function{merge}{set of trees $\mathcal{T}$, labeling $L$}\Comment{merges $\mathcal{T}$ into a DAG $D$}
	\State $D \gets $ empty DAG  \Comment{start with empty DAG}
	\State initialize $D.can\_map$ as an empty map \Comment{maps canonization to node in DAG}
	\ForEach{$T \in \mathcal{T}$}
	\State compute canonization $can(v)$ for $v \in V(T)$  under $L$
	\State \textsc{add}($D$, $T$, $r(T)$, $L$)\Comment{add tree, starting at root}
	\EndFor
	\Return $D$
	\EndFunction
	\State
	\Function{add}{DAG $D$, tree $T$, vertex $v$, labeling $L$} \Comment{adds substructure rooted at $v \in V(T)$ to $D$}
	\If{$can(v) \in D.can\_map$} \Comment{node (and substructure) already present in $D$}
	\Return 
	\EndIf
	\ForEach{$c \in \children(v) $}  \Comment{add all children first (if necessary)}
	\State \textsc{add}($D$, $T$, $c$, $L$)
	\EndFor
	\State add new node $v_2$ with $L(v_2) = L(v)$ to $D$ \Comment{add new node}
	\State set $can(v_2) = can(v)$ and $D.can\_map(can(v_2)) = v_2$
	\ForEach{$c \in \children(v) $} 
	\If{edge exists from $D.can\_map(can(c))$ to $v_2$} 
	\State increase multiplicity of edge by $1$ \Comment{a sibling had the same canonization}
	\Else{}
	\State add edge from $D.can\_map(can(c))$ to $v_2$ \Comment{add edges from children to new node}
	\EndIf
	\EndFor
	\EndFunction
\end{algorithmic}
\end{algorithm}

Algorithm~\ref{alg:merge} describes how to merge a set of trees $\{T_1, \dots, T_n\}$ into a DAG under a labeling function $L\colon \bigcup_{i \in \{1, \dots, n\}} V(T_i) \to \mathcal{O}$, where $\mathcal{O}$ is some arbitrary labeling.
All substructures that are isomorphic under $L$ are merged. For that, the canonization of all vertices is computed first. Then each tree is merged to the DAG separately:
Starting at the root $r(T)$ of the tree that is added, if a node with the same canonization as $r(T)$ exists in the DAG, nothing needs to be done. Otherwise,the subtrees rooted at the children of $r(T)$ are added first (using the same procedure as for $r(T)$), and then a new node for $r(T)$ is added along with edges to the nodes in the DAG that have the same canonization as the children of $r(T)$. Note that, if some children have the same canonization, in this step multiedges can occur. The algorithm can easily be extended to merge DAGs by iterating over all roots in $\textsc{merge}$ and adding them to the DAG.
The running time of the algorithm depends on the canonization, which can be done in time linear in the numbers of nodes (see Appendix~\ref{sec:ahu}), and the time needed to add the trees to the DAG. Since we add each node at most once, and can check whether a canonization is already present in the DAG in constant time, this also only needs time linear in the number of tree nodes.

\section{DAG-MLP Architecture for Graph Classification }
\label{sec:dag_mlp_graph_class_architecture}

\begin{figure}
\centering
\includegraphics[width=\textwidth]{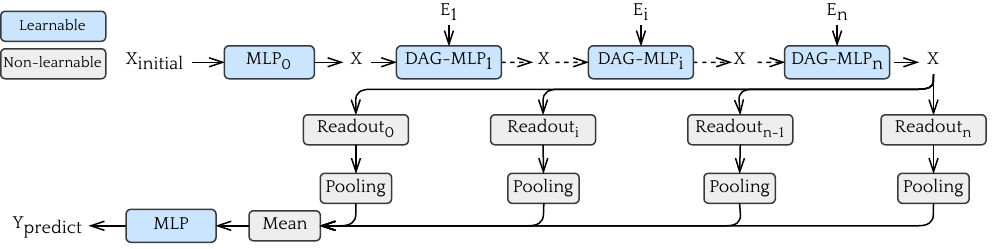}
\caption{DAG-MLP architecture with $n$ layers for graph-level prediction tasks.}
\label{fig:dag_mlp_graph_class_arch}
\end{figure}

Figure~\ref{fig:dag_mlp_graph_class_arch} shows an example of the architecture when using unfolding or neighborhood trees of height up to $n$ for graph classification requiring $n$ DAG-MLP layers.
The vertex features are initially transformed into embedding with fixed dimension using MLP$_0$.
Messages are then propagated using the DAG from height $0$ to $1$ ($\mathcal{E}_1$), which corresponds to layer $1$.
This process is repeated for $n$ layers, where the $i$th step computes embeddings for nodes of height $i$ in the DAG.
After $n$ layers, all node embeddings in the DAG ($X$) have been updated.
Using readouts, we extract the embeddings of each vertex from $k$-NTs of different heights within the DAG.
These extracted embeddings correspond to the embeddings of different layers.
A pooling operation is then applied to the output of each layer, and the pooled outputs are averaged.
These averaged outputs are passed through a final MLP, transforming them into probabilities for class prediction.

\section{Additional Experiments}
\label{app:additionalexp}
Following the same experimental setup as in Table~\ref{tab:exp_case_study}, Table~\ref{tab:csl_case_study} shows the accuracy with varying parameters $k$ and $l$. Since the expressive capabilities are the same as those of GIN when the number of layers $l$ equals the redundancy parameter $k$, all results with $l=k$ are not better than guessing.

\begin{table}[tb]
\caption{Average accuracy of DAG-MLP using 5-fold cross-validation on CSL~\citep{murphy2019relational}, evaluated with varying parameters $k$ and $l$.}
\label{tab:csl_case_study}
\centering
\begin{tabular}{lcccccc}
	\toprule
	$k$-\textbf{NTs} & \textbf{1 layer} & \textbf{2 layers} & \textbf{3 layers} & \textbf{4 layers} & \textbf{5 layers} & \textbf{6 layers} \\
	\midrule
	$0$-NTs  & $10.0 \pm 0.0$ & $20.0 \pm 0.0$ & $40.0 \pm 0.0$ & $70.0 \pm 0.0$  & $84.0 \pm 4.9$  & \textbf{100.0} $\pm$ 0.0 \\
	$1$-NTs  & $10.0 \pm 0.0$ & $10.0 \pm 0.0$ & $30.0 \pm 0.0$ & $48.0 \pm 4.0$  & $78.0 \pm 9.8$  & \textbf{100.0} $\pm$ 0.0 \\
	$2$-NTs  & -            & $10.0 \pm 0.0$ & $16.0 \pm 4.9$ & $20.0 \pm 12.6$ & $50.0 \pm 8.9$  & $80.0 \pm 12.6$ \\
	$3$-NTs  & -            & -             & $10.0 \pm 0.0$ & $10.0 \pm 0.0$  & $20.0 \pm 12.6$ & $38.0 \pm 14.7$ \\
	$4$-NTs  & -            & -             & -             & $10.0 \pm 0.0$  & $16.0 \pm 8.0$  & $34.0 \pm 12.0$ \\
	$5$-NTs  & -            & -             & -             & -               & $10.0 \pm 0.0$  & $10.0 \pm 0.0$  \\
	$6$-NTs  & -            & -             & -             & -               & -               & $10.0 \pm 0.0$  \\
	\bottomrule
\end{tabular}
\end{table}

Table~\ref{tab:mutag_different_heights_vs_fixed_heights} shows a comparison of $0$- or $1$-NTs, with combined heights and fixed single-height, for 10-fold cross-validation on MUTAG. The results as well as those shown in Table~\ref{tab:comparison_combine} indicate that using multiple different tree heights does not improve the generalization capabilities of the model.

\begin{table}[tb]
\scriptsize
\renewcommand{\arraystretch}{1.0}
\centering
\caption{Classification accuracy for 10-folds ($\pm$ standard deviation) on MUTAG comparing DAG-MLP that combines layers of different heights to DAG-MLP that only uses layers at a fixed height.}
\label{tab:mutag_different_heights_vs_fixed_heights}
\begin{tabular}{lccc|cccc}
	\toprule
	\multirow{2}{*}{\textbf{$k$-NTs}} & \multicolumn{3}{c|}{\textbf{Combine Heights}} & \multicolumn{3}{c}{\textbf{Fixed Single-Height}} \\
	& \textbf{1 layer} & \textbf{2 layers} & \textbf{3 layers} & \textbf{1 layer} & \textbf{2 layers} & \textbf{3 layers} \\
	\midrule
	$0$-NTs & 84.6 $\pm$ 6.2 & 86.7 $\pm$ 5.3 & 86.9 $\pm$ 6.0 & 85.3 $\pm$ 6.3 & 89.0 $\pm$ 4.7 & 87.2 $\pm$ 5.1 \\
	$1$-NTs & 84.9 $\pm$ 6.0 & 83.3 $\pm$ 7.3 & 88.6 $\pm$ 6.7 & 85.8 $\pm$ 6.0 & 88.8 $\pm$ 4.4 & 90.4 $\pm$ 5.1 \\
	\bottomrule
\end{tabular}
\end{table}

Figure~\ref{fig:tudataset_statistics} shows box plot charts for the accuracy obtained in Table~\ref{tab:results_tudatasets}. Due to the use of 10-fold cross-validation and the random initialization of the MLPs, the results tend to have high variance. For all datasets, the accuracy of DAG-MLP is statistically within the same boundaries as those of the best related methods reported in Table~\ref{tab:results_tudatasets}.

\begin{figure}
\centering
\subfloat[\centering Fixed Single-Height]{{\includegraphics[width=0.45\linewidth]{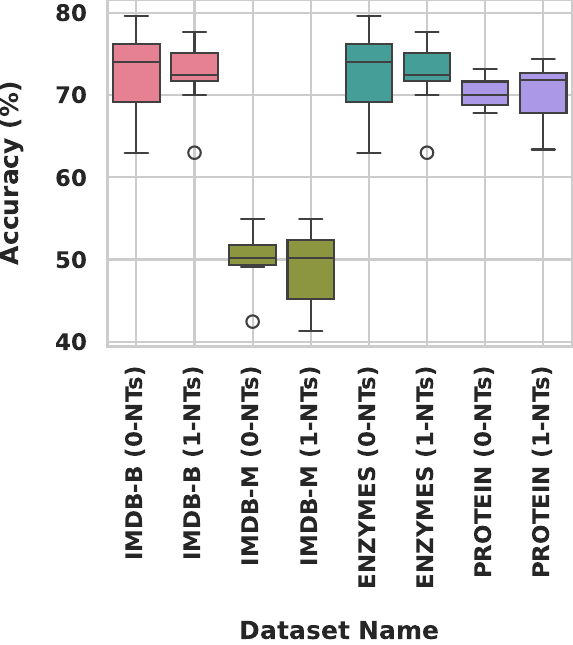} }}%
\qquad
\subfloat[\centering Combine Heights]{{\includegraphics[width=0.45\linewidth]{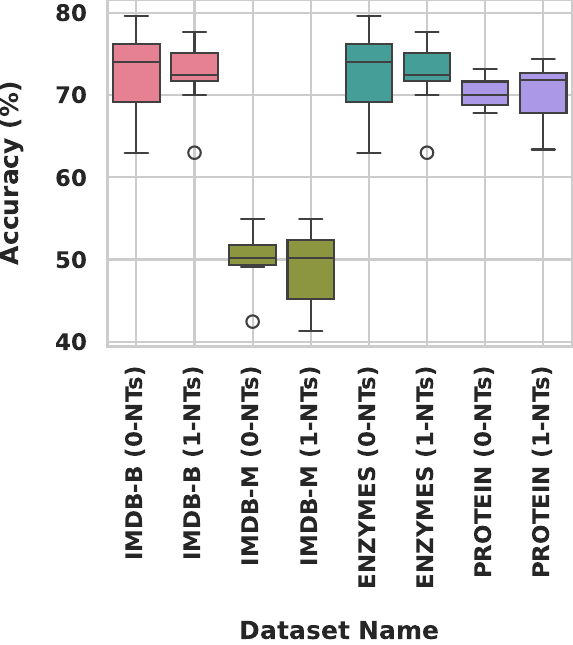} }}%
\caption{Graph classification test accuracy box plot over three runs of 10-fold cross-validation of the DAG-MLP on the datasets from the TUDataset. %
}
\label{fig:tudataset_statistics}
\end{figure}

\begin{table}[h]
\centering
\caption{Performance comparison of GIN and DAG-MLP architectures with different combination strategies between layers across datasets. The best result for each method on each dataset is marked in \textbf{bold}.}
\label{tab:comparison_combine}
\begin{tabular}{lccc}
\toprule
\textbf{Model} & \textbf{Texas} & \textbf{Wisconsin} & \textbf{Cornell} \\
\midrule
GIN (l=2) - Without Combine & 61.89 $\pm$ 7.0 & 57.65 $\pm$ 5.6 & 46.22 $\pm$ 6.0 \\
GIN (l=2) - Sum Combine & 69.46 $\pm$ 7.3 & 68.04 $\pm$ 6.0 & 58.11 $\pm$ 5.8 \\
GIN (l=2) - Mean Combine & 70.81 $\pm$ 7.1 & 68.82 $\pm$ 5.8 & 54.86 $\pm$ 8.2 \\
GIN (l=2) - Concat Combine & \textbf{73.78 $\pm$ 6.0} & 69.22 $\pm$ 6.7 & \textbf{60.81 $\pm$ 8.5} \\
\addlinespace
GIN (l=3) - Without Combine & 67.03 $\pm$ 7.2 & 58.82 $\pm$ 6.5 & 43.51 $\pm$ 7.3 \\
GIN (l=3) - Sum Combine & 67.03 $\pm$ 5.0 & 66.08 $\pm$ 5.8 & 50.81 $\pm$ 7.8 \\
GIN (l=3) - Mean Combine & 64.86 $\pm$ 6.6 & 63.92 $\pm$ 6.5 & 52.97 $\pm$ 9.5 \\
GIN (l=3) - Concat Combine & 72.70 $\pm$ 4.6 & \textbf{71.76 $\pm$ 5.1} & 59.73 $\pm$ 11.3 \\
\addlinespace
GIN (l=4) - Without Combine & 67.57 $\pm$ 5.5 & 59.41 $\pm$ 3.8 & 42.70 $\pm$ 5.0 \\
GIN (l=4) - Sum Combine & 66.49 $\pm$ 8.4 & 63.14 $\pm$ 6.2 & 52.16 $\pm$ 10.1 \\
GIN (l=4) - Mean Combine & 70.54 $\pm$ 4.6 & 63.73 $\pm$ 8.9 & 49.73 $\pm$ 9.8 \\
GIN (l=4) - Concat Combine & 69.73 $\pm$ 6.1 & 68.63 $\pm$ 7.9 & 57.30 $\pm$ 7.7 \\
\addlinespace
GIN (l=5) - Without Combine & 62.70 $\pm$ 6.8 & 52.16 $\pm$ 8.0 & 44.86 $\pm$ 7.0 \\
GIN (l=5) - Sum Combine & 67.57 $\pm$ 6.6 & 60.59 $\pm$ 6.4 & 47.03 $\pm$ 8.4 \\
GIN (l=5) - Mean Combine & 70.54 $\pm$ 6.7 & 57.84 $\pm$ 7.7 & 48.11 $\pm$ 10.9 \\
GIN (l=5) - Concat Combine & 73.24 $\pm$ 4.6 & 66.27 $\pm$ 3.7 & 51.89 $\pm$ 7.7 \\
\midrule
\addlinespace
DAGMLP (l=2; $0$-NTs) - Without Combine & 74.59 $\pm$ 4.7 & 65.10 $\pm$ 6.6 & 60.81 $\pm$ 4.6 \\
DAGMLP (l=2; $0$-NTs) - Sum Combine & \textbf{85.68 $\pm$ 4.8} & 77.84 $\pm$ 5.3 & 68.38 $\pm$ 4.5 \\
DAGMLP (l=2; $0$-NTs) - Mean Combine & 77.03 $\pm$ 6.3 & 76.67 $\pm$ 3.7 & 65.95 $\pm$ 5.7 \\
DAGMLP (l=2; $0$-NTs) - Concat Combine & 81.35 $\pm$ 7.1 & \textbf{79.41 $\pm$ 4.6} & \textbf{68.92 $\pm$ 5.4} \\
\addlinespace
DAGMLP (l=3; $0$-NTs) - Without Combine & 74.86 $\pm$ 8.8 & 66.08 $\pm$ 4.3 & 57.84 $\pm$ 2.5 \\
DAGMLP (l=3; $0$-NTs) - Sum Combine & 78.11 $\pm$ 6.6 & 76.67 $\pm$ 4.5 & 64.59 $\pm$ 5.9 \\
DAGMLP (l=3; $0$-NTs) - MEAN Combine & 77.57 $\pm$ 5.1 & 74.31 $\pm$ 5.1 & 63.24 $\pm$ 4.0 \\
DAGMLP (l=3; $0$-NTs) - Concat Combine & 80.27 $\pm$ 8.1 & 78.82 $\pm$ 5.2 & 64.05 $\pm$ 5.8 \\
\midrule
\addlinespace
DAGMLP (l=2; $1$-NTs) - Without Combine & 65.68 $\pm$ 5.4 & 64.51 $\pm$ 7.4 & 55.95 $\pm$ 6.7 \\
DAGMLP (l=2; $1$-NTs) - Sum Combine &\textbf{ 80.54 $\pm$ 6.0} & 79.22 $\pm$ 6.3 & 67.03 $\pm$ 6.7 \\
DAGMLP (l=2; $1$-NTs) - Mean Combine & 78.38 $\pm$ 5.7 & \textbf{79.61 $\pm$ 3.9} & 66.49 $\pm$ 5.0 \\
DAGMLP (l=2; $1$-NTs) - Concat Combine & 79.73 $\pm$ 3.7 & 79.61 $\pm$ 5.1 & \textbf{69.19 $\pm$ 4.6} \\
\addlinespace
DAGMLP (l=3; $1$-NTs) - Without Combine & 65.95 $\pm$ 6.3 & 60.98 $\pm$ 7.9 & 54.59 $\pm$ 6.3 \\
DAGMLP (l=3; $1$-NTs) - Sum Combine & 78.38 $\pm$ 6.2 & 79.61 $\pm$ 5.2 & 64.59 $\pm$ 5.0 \\
DAGMLP (l=3; $1$-NTs) - Mean Combine & 73.51 $\pm$ 5.4 & 75.29 $\pm$ 4.9 & 66.49 $\pm$ 4.9 \\
DAGMLP (l=3; $1$-NTs) - Concat Combine & 80.27 $\pm$ 6.0 & 78.63 $\pm$ 4.8 & 67.03 $\pm$ 2.6 \\
\bottomrule
\end{tabular}
\end{table}

\section{Datasets}
\label{app:datasets}
We provide information about the datasets used in the experimental evaluation. Table~\ref{tab:datasets_summary} provides an overview of the datasets, along with their corresponding characteristics. %

\paragraph{Synthetic datasets.}
(1) EXP-Classification (EXP-Class) and EXP-Isomorphic (EXP-Iso) evaluate GNN expressivity, featuring graph pairs with varying SAT outcomes and 1-WL distinguishability~\citep{abboud2021surprising}. EXP-Class extends EXP-Iso by including 50\% ``corrupted'' data, making the learning task more challenging.
(2) Circulant Skip Links (CSL) graphs~\citep{murphy2019relational} are highly symmetric, 4-regular graphs that consist of a cycle with additional 'skip links.' Despite their symmetry, these graphs present a challenge for the WL test and GNNs based on WL, as these methods fail to distinguish between non-isomorphic instances of such graphs.

\paragraph{Real-world datasets.} We examine Texas, Wisconsin, Cornell, Cora, CiteSeer, PubMed, MUTAG, IMDB-B, IMDB-M, ENZYMES, and PROTEINS from ~\citep{Craven1998LearningTE, McCallum2000AutomatingTC, Giles1998CiteSeerAA, Sen2008CollectiveCI, Datasets}. Texas, Wisconsin, and Cornell are web page datasets, each representing a different university's web domain. Cora dataset comprises scientific publications classified into seven categories, making it a standard benchmark for citation network studies. CiteSeer is another citation network dataset, including academic papers for document classification. The PubMed dataset, derived from biomedical literature, leveraging its rich metadata encompassing abstracts, citations, and other bibliometric information. IMDB-B and IMDB-M are movie network datasets for binary and multi-class classification, respectively. ENZYMES has six protein graph classes, while PROTEINS represents a binary classification task from bioinformatics.

\begin{table}[tb]
\centering
\caption{Summary of characteristics for the synthetic datasets~\citep{murphy2019relational, abboud2021surprising} and TUDatasets~\citep{Datasets}. The table provides information on the dataset name, number of graphs (\boldmath\textbf{$\vert$G$\vert$}), average number of nodes (\boldmath\textbf{$\overline{\vert V\vert}$}), average number of edges (\boldmath\textbf{$\overline{\vert E \vert}$}), and average diameter  (\boldmath\textbf{$\overline{ D}$}) for each dataset.}
\label{tab:datasets_summary}
\begin{tabular}{lcccc}
	\toprule
	\textbf{Dataset} & \boldmath\textbf{$\vert$G$\vert$} & \boldmath\textbf{$\overline{\vert V\vert}$} & \boldmath\textbf{$\overline{\vert E \vert}$} & \boldmath\textbf{$\overline{D }$} \\
	\midrule
	\textbf{CSL} & 150 & 41.0 & 164.0 & 6.0 \\
	\textbf{EXP-Class} & 1200 & 55.8 & 139.6 & 12.6 \\
	\textbf{EXP-Iso} & 1200 & 44.4 & 110.2 & 8.5 \\
	\midrule
	\textbf{MUTAG} & 188 & 17.93 &  39.59 &  8.22 \\
	\textbf{IMDB-B} & 1000 & 19.8 & 193.1 & 1.9 \\
	\textbf{IMDB-M} & 1500 & 13.0 & 131.9 & 1.5 \\
	\textbf{ENZYMES} & 600 & 32.6 & 124.3 & 10.9 \\
	\textbf{PROTEINS} & 1113 & 39.1 & 145.6 & 11.6 \\
	\midrule
	\textbf{Texas} & 1 & 183 & 325 & 8 \\
	\textbf{Wisconsin} & 1 & 251 & 515 & 8 \\
	\textbf{Cornell } & 1 & 183 & 298 & 8 \\
	\textbf{Cora} & 1 & 2708 & 10556 & 19 \\
	\textbf{CiteSeer} & 1 & 3327 & 9104 & 28 \\
	\textbf{PubMed} & 1 & 19717 & 88648 & 18 \\
	\bottomrule
\end{tabular}
\end{table}

\section{Running Time}
The running time for generating and merging $0$- and $1$-NTs with different layers on different datasets is presented in Figure~\ref{fig:dag_build_time_0k}. We employ a parallelized algorithm to construct the NTs, where each graph is also processed in parallel.

\begin{figure}[tb]
\centering
\includegraphics[width=0.99\textwidth]{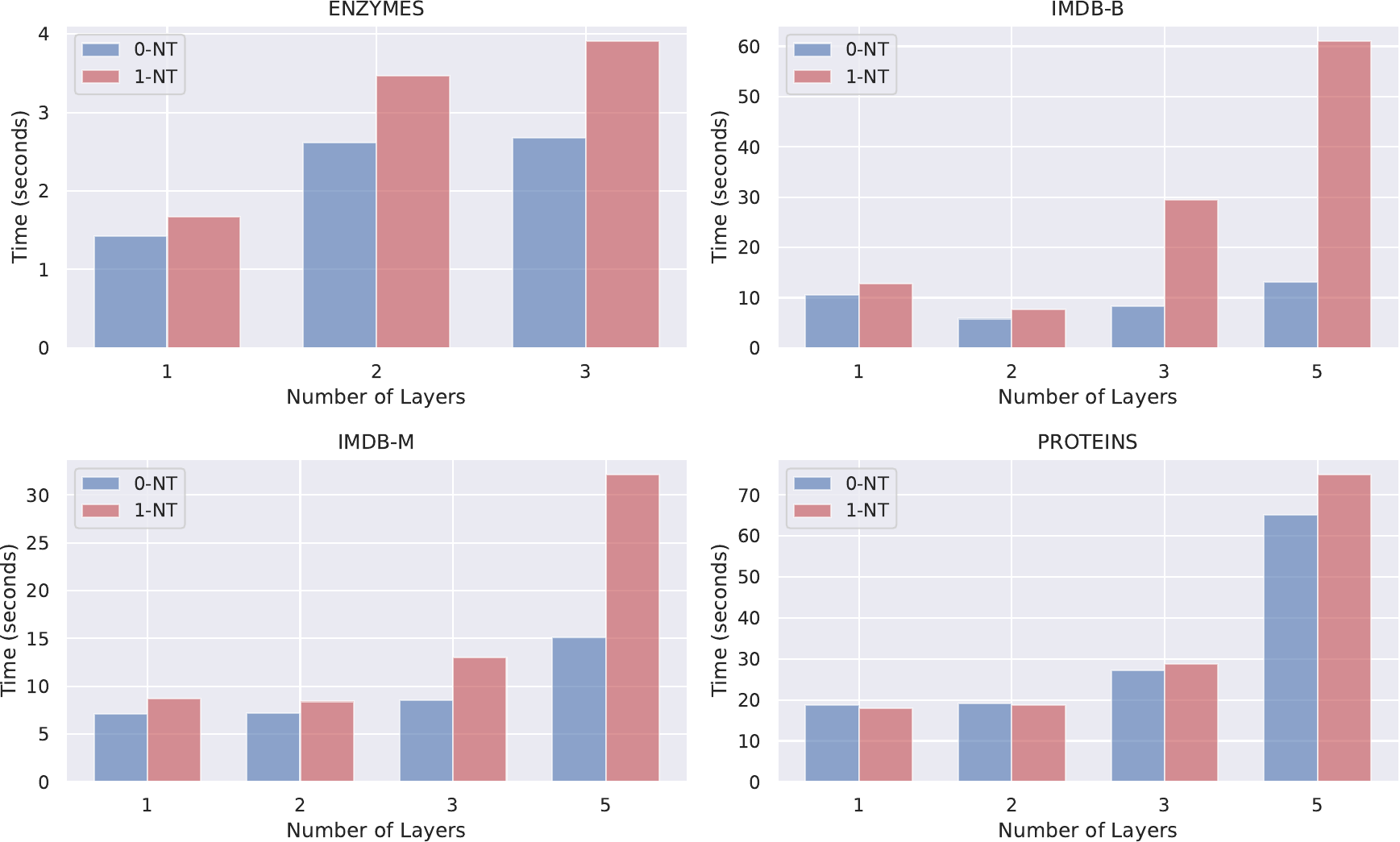}
\caption{Running time for building the $0$- and $1$-redundant NTs.}
\label{fig:dag_build_time_0k}        
\end{figure}

\section{Hyper-Parameters}
\label{sec:hyperparameter}
The hyper-parameters used for the synthetic datasets can be seen in Table~\ref{tab:synthetic_dataset_config}. The hyper-parameters for the TUDataset experiments were chosen as follows:
The batch size for training is set to $64$. Learning rate (LR) is set to $0.001$. The classifier trained for $500$ epochs. The dimension of the embedding is set to $128$. The optimizer used is Adam, and the scheduler is set to \texttt{StepLR} with a step size of $100$ and a gamma value of $0.5$. The aggregation method (pooling) is defined as \texttt{mean} and a dropout rate of $0.5$ is specified. Early stopping is configured with a patience of $250$ epochs and uses accuracy instead of loss. The number of layers for each dataset is set as in Table~\ref{tab:layer_config}, and shuffling of the dataset is enabled.

\begin{table}[tb]
\centering
\caption{Synthetic dataset hyper-parameter configuration details.}
\label{tab:synthetic_dataset_config}
\resizebox{\textwidth}{!}{
\begin{tabular}{lcccccccc}
	\toprule
	\textbf{Dataset} & \textbf{Task} & \textbf{Embedding} & \textbf{Target} & \textbf{Layers} & \textbf{Batch Size} & \textbf{Epochs} & \textbf{LR} \\
	\midrule
	\textbf{EXP-Class} & Classification & 64 & 10 & 15 & 32 & 200 & $10^{-3}$ \\
	\textbf{EXP-Iso} & Isomorphism Test & 1 & 1 & 6 & 1 & - & - \\
	\textbf{CSL} & Classification & 64 & 10 & 6 & 32 & 200 & $10^{-3}$ \\
	\bottomrule
\end{tabular}
}
\end{table}

\begin{table}[tb]
\centering
\caption{TUDatasets layer configuration details.}
\label{tab:layer_config}
\begin{tabular}{lccccc}
	\toprule
	\textbf{Dataset} & \textbf{IMDB-B} & \textbf{IMDB-M} & \textbf{ENZYMES} & \textbf{PROTEINS} \\
	\midrule
	\textbf{Layers} & 5, 3, 2 & 5, 3, 2 & 3, 2 & 5, 3, 2 \\
	\bottomrule
\end{tabular}
\end{table}

\section{Hardware}
\label{sec:hardware}
The hardware configuration consists of dual AMD 7252 CPUs, each with 8 cores, and two NVIDIA A40 GPUs. The system is supplemented with 256 GB of RAM. Each NVIDIA A40 GPU comes with 10,752 CUDA cores and a clock frequency of 1.305 GHz. The GPUs have 48 GB of memory and a bandwidth of 696 GB/s, operating at a Thermal Design Power (TDP) of 300 Watts. In terms of performance, the GPUs can deliver 37,400 GFLOPs in single-precision (FP32) and 1,169 GFLOPs in double-precision (FP64) computations.

\end{document}